\documentclass[11pt]{article}

\usepackage[margin=1in]{geometry}

\usepackage{amsmath,amssymb,amsthm}
\usepackage{natbib}
\usepackage{graphicx}
\usepackage{url}
\usepackage[colorlinks=true,citecolor=blue,linkcolor=blue,urlcolor=blue]{hyperref}

\usepackage{booktabs}
\usepackage{multirow}
\usepackage{tikz}
\usepackage{thmtools}
\usepackage{thm-restate}
\usepackage{xcolor}
\usepackage{microtype}

\usetikzlibrary{arrows.meta, positioning, calc}

\theoremstyle{plain}
\newtheorem{theorem}{Theorem}[section]
\newtheorem{lemma}[theorem]{Lemma}
\newtheorem{corollary}[theorem]{Corollary}
\newtheorem{proposition}[theorem]{Proposition}

\theoremstyle{definition}
\newtheorem{definition}[theorem]{Definition}

\title{Compact Geometric Representations of Hierarchies}

\author{
Prashant Gokhale \\
UW-Madison 
\and
Piotr Indyk \\
MIT 
\and
Yuhao Liu \\
UW-Madison 
\and
Sandeep Silwal \\
UW-Madison 
\and
Tony Chang Wang \\
UW-Madison  
\and
Haike Xu \\
MIT 
}

\date{}

\begin{document}

\maketitle

\begin{abstract}%
Computing geometric representations of data is a cornerstone of modern machine learning, typically achieved by training dual encoders which map queries and documents into a shared embedding space. Recent work of You et al. [NeurIPS '25] has extended this approach to hierarchical retrieval, where relevance is determined by the ancestor-descendant relationships in a Directed Acyclic Graph (DAG). While previous work has shown that valid embeddings exist when the number of descendants is small, these bounds degrade significantly for deep hierarchies, requiring dimensions as large as the total number of nodes.

In this paper, we investigate compact reachability embeddings for more general graph classes and provide theoretical guarantees for representing hierarchies using embeddings whose dimension depends on structural graph parameters. We prove that for any directed tree, there exists a reachability embedding in constant dimension 3, independent of the tree's size or depth. We generalize this result to graphs characterized by treewidth $t$, constructing embeddings of dimension $O(t \log n)$, where $n$ is the number of nodes. Complementing these upper bounds, we provide matching or near-matching lower bounds, showing that dimension $\Omega(n)$ is necessary for general DAGs and $\Omega(t/\log(n/t))$ is required for graphs of treewidth $t$. We also obtain upper and lower bounds parameterized by the number of cross-edges in the DAG. We additionally show that our embeddings can be constructed on real world datasets, and that they give much smaller dimensions in high recall regimes compared to prior embeddings with theoretical guarantees.

\end{abstract}


\section{Introduction}

Computing geometric representations of data, often called embeddings, is one of the key tools in modern machine learning and information retrieval. Such representations are typically computed by training a {\em bi-encoder}, i.e., two mappings $a$ and $b$, such that $a$ maps queries and $b$ maps  data points  into some $d$-dimensional feature space. The mappings are designed such that the data points $p$ that are relevant to a query $q$ are mapped to similar representations $a(q)$ and $b(p)$, and to dissimilar representations otherwise. Formally, $a(q)$ and $b(p)$ are defined as {\em similar} if and only if the inner product $a(q) \cdot b(p)>r$ for some threshold $r$. Thanks to such embeddings,  retrieval is reduced to (approximate) nearest neighbor search in a high-dimensional space, a task for which many efficient algorithms have been developed~\citep{andoni2008near, malkov2018efficient,jayaram2019diskann}.  This approach, often referred to as {\em dense retrieval}, has grown in popularity over the last two decades, exemplified by~\citet{zhan2021optimizing,zhao2024dense,fang2024scaling,chen2024dense, theoreticaltopk, denseretrieval1, denseretrieval2}. 

The accuracy of dense retrieval is mostly determined by the quality of the embeddings. Although such quality embeddings exist for many retrieval tasks, recent work on more complex benchmarks such as BRIGHT~\citep{su2024bright} demonstrates the challenges of this approach when the definition of relevance involves complex reasoning. This motivated several recent works~\citep{guo2019breaking,weller2025theoretical, you2025hierarchical} investigating when, and under what conditions, good bi-encoders exist.

In this paper we study embeddings for {\em hierarchical retrieval}, an important class of retrieval tasks where the documents are organized into an (often implicit) hierarchy that determines the relevance. For example, a document on “Poodles” is relevant to a query about a more general category of “Dogs”, while the reverse is not true in general. Such hierarchies can be modeled as Directed Acyclic Graphs (DAGs) defined over the data: a document is relevant to a query if the query corresponds to an {\em ancestor} of the document in the DAG. Formally, $a$ and $b$ are reachability embeddings if for any pair of nodes $u$ and $v$ in a DAG, $v$ is reachable from $u$ if and only if $a(u) \cdot b(v)>0$.

A recent work~\citep{you2025hierarchical} pioneered the study of embeddings that represent such notion of relevance. In a surprising finding, it shows that if the number of descendants $S(v)$ of any node $v$ in the DAG is bounded by $s$, then reachability embeddings exist even in dimension of roughly $d=O(s \log n)$, where $n$ is the number of documents.  

However, their work leaves open the setting when some nodes $v$ have many descendants. For example, if a hierarchy is a tree, then the bound $s$ corresponds to the height of the tree, which could be as large as $n$. 
Furthermore, the aforementioned theorem in fact holds in {\em any} setting where the number of nodes relevant to any query $q$ is bounded by $s$ - the relevance sets do not need to involve reachability in a  DAG~\citep{guo2019breaking}. This raises the question of whether more fine-grained embeddings exist, especially ones that are parametrized by well-studied graph parameters.

\begin{table}[!h]
\centering
{\renewcommand{\arraystretch}{1.75}
\begin{tabular}{c|c|c}

\textbf{Graph Class} & \textbf{Upper Bound} & \textbf{Lower Bound} \\ \hline
General DAG & \begin{tabular}[c]{@{}c@{}}$O(n)$\\ ~\cite{you2025hierarchical} \end{tabular} & \begin{tabular}[c]{@{}c@{}}$\Omega\left( n \right)$\\ Theorem \ref{lb:dag}\end{tabular} \\ \hline
Treewidth $t$ & \begin{tabular}[c]{@{}c@{}}$O(t \log n)$\\ Theorem \ref{dig}\end{tabular} & \begin{tabular}[c]{@{}c@{}}$\Omega\left( \frac{t}{\log{\frac{n}{t}}} \right)$\\ Theorem \ref{lb:t/logn}\end{tabular} \\ \hline
Cross-Edges $k$ & \begin{tabular}[c]{@{}c@{}}$3 + k$\\ Theorem \ref{dag}\end{tabular} & \begin{tabular}[c]{@{}c@{}}$\Omega\left(\max\left(k^{1/4}, \frac{k \log(n^2/k)}{n \log n} \right)\right)$\\ Theorems \ref{thm:crossedge-lb} and \ref{lb:cross}\end{tabular} \\ \hline
Directed Trees & \begin{tabular}[c]{@{}c@{}}$3$\\ Theorem \ref{tree}\end{tabular} & \begin{tabular}[c]{@{}c@{}}$ 3$\\ Lemma \ref{lem:tree_lb} \end{tabular} \\ 
\end{tabular}
}
\caption{A summary of our upper and lower bounds for the embedding dimension needed to represent reachability in various graph classes.\label{table:results}}
\end{table}

\paragraph{Our results} In this paper we answer the above question in the affirmative. Specifically, we present compact reachability embeddings whose dimension depends on the following graph parameters:
\begin{itemize}
\item {\em Treewidth:} We use the treewidth of the underlying undirected graph. Informally, a graph has treewidth $t$ if it can be decomposed into  balanced components by removing  $t$ nodes (see Section \ref{sec:prelims} for a formal definition). Treewidth is a well-studied graph parameter characterizing how ``tree like'' the graph is; in particular, the treewidth of a tree is $1$. It is also a popular parameter characterizing the complexity of graph algorithms \citep{cygan2015parameterized}, as well as algorithms in the machine learning literature~\citep{elidan2008learning,scanagatta2016learning}. In particular, many real-world graphs representing hierarchical relationships admit small treewidth; for example, for the WordNet dataset~\citep{wordnet} used by \cite{you2025hierarchical}, a simple check on the underlying undirected graph shows that its treewidth is at most $37$.

\item  {\em Cross-Edges}: 
A distinct measure that we also use to measure the complexity of a hierarchy is the number of cross-edges. Given any directed tree $T$ with edges from the DAG $G$, an edge $(u,v)$ is called a {\em cross-edge} w.r.t. $T$ if $(u,v)$ is an edge in $G$ but $v$ is not reachable from $u$ in $T$. This measure is popular in the database community for parameterizing the memory of data structures answering DAG reachability queries (see \cite{wang2006dual,yildirim2010grail,chen2005stack} and the survey \cite{zhang2025indexing} for details). Our work instead uses this natural parameter for characterizing the embedding dimension.

\end{itemize}

Our results are summarized in Table \ref{table:results}. In short, we present reachability embeddings with dimension depending linearly on parameters of interest (treewidth or the number of cross-edges). In the specific case where the graph is a directed tree, we construct embeddings into space of {\em constant} dimension $3$. Note that in this case, prior embeddings given in~\citet{you2025hierarchical} required dimension as high as $\Omega(n)$, where $n$ is the total number of nodes in the graph. We note that dimension $3$ is necessary, even for directed trees of constant size (Lemma \ref{lem:tree_lb}).

Lastly, in Appendix \ref{sec:independence}, we describe two graph families, one having constant tree-width but $\Omega(n)$ number of cross-edges, and the other having $\Omega(n)$ tree-width but a constant number of cross edges, showing that our two upper bounds are incomparable in general.

\section{Preliminaries}\label{sec:prelims} 
We first define reachability embeddings. 

\begin{definition}[Reachability]
    Let $G=(V, E)$ be a directed graph. We say a vertex $v$ is reachable from another vertex $u$ in $G$, denoted by $u \rightsquigarrow_G v$, if and only if there is a directed path from $u$ to $v$ in $G$. We denote by $u \not\rightsquigarrow_G v$ if there is no path from $u$ to $v$ in $G$. We may drop the subscript $G$ when it is clear from context.
\end{definition}

\begin{definition}[Reachability Embedding]
Let $G=(V, E)$ be a directed graph. A reachability embedding of dimension $d$ in $G$ is defined as a pair of two embeddings $a, b : V \mapsto \mathbb{R}^d$ such that for any two vertices $u, v \in V$:
$$
    \langle a_u, b_v \rangle > 0 \iff  u \rightsquigarrow_G v.
$$

\end{definition}

Next we formally introduce treewidth and review relevant properties useful for Theorem \ref{dig}. First we state the definition of a tree cover.

\begin{definition}\label{def:tree_cover}
    Let $G = (V, E)$ be an undirected graph, $T$ be a tree, and $\mathcal{V}=\left(V_t\right)_{t \in T}$ be a family of vertex sets $V_t \subseteq V$ indexed by all the nodes $t$ of $T$ (we also call $\mathcal{V}$ the bags of $V$). Then $(T, \mathcal{V})$ is a tree-decomposition of $G$ if the following hold:
    \begin{itemize}
        \item $V = \cup_{t \in T} V_t$;
        \item For every edge $(u,v) \in E$, there exists a bag containing both $u,v$: that is $u,v \in V_t$ for some $t \in T$;
        \item For any vertex $u \in V$, all the bags containing $u$ (i.e., $\left\{t \in T: u \in V_t\right\}$) induce a connected subtree of $T$.
    \end{itemize}
    When $T$ is a path, the tree-decomposition is called a path-decomposition.
\end{definition}

Intuitively, the treewidth of $G$ is the minimum integer $k$ such that there exists a tree decomposition of $G$ with bags of size at most $k+1$.

\begin{definition}[Treewidth]\label{def:width}
    The width of a tree decomposition of an undirected graph $G$ is one less than the maximum bag size of that tree decomposition of $G$. The treewidth $\operatorname{tw} \left(G\right)$ of $G$ is the minimum width of all tree decompositions of $G$.

\end{definition}

Treewidth has the following properties which we use in the construction of Theorem \ref{dig}.


\begin{theorem} [\citet{robertson1986graph}]\label{sep}
    Let $G=(V,E)$ be an undirected graph with treewidth $t$. Then there exists a vertex set $S\subseteq V$ with $|S|\leq t+1$ such that every connected component of $G[V\setminus S]$ has at most $\frac{1}{2}|V\setminus S|$ vertices. The vertex set $S$ is also referred to as vertex separators.
\end{theorem}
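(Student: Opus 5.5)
The plan is the standard ``centroid bag'' argument on a tree decomposition, with one extra twist to get the stronger denominator $|V\setminus S|$ rather than $|V|$.

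\textbf{Setup and the key structural fact.} Fix a tree decomposition $(T,\mathcal V)$ of width $t$, so every bag has $|V_x|\le t+1$. For each tree edge $xy$, let $T_x$ and $T_y$ be the two components of $T-xy$.

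The fact I use throughout comes from the connectivity axiom of Definition~\ref{def:tree_cover}. Every vertex $v\notin V_x$ has its bags inside a single branch of $T-x$. Moreover, an edge of $G$ lies in some bag, so adjacent vertices outside $V_x$ live in the same branch. Hence every connected component of $G-V_x$ is contained in the set of vertices whose bags all lie in one branch of $T-x$.

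Similarly, for an edge $xy$ and its adhesion $S_{xy}=V_x\cap V_y$, we get a partition
$$V=A_{xy}\sqcup B_{xy}\sqcup S_{xy},$$
where $A_{xy}$ are the vertices appearing only in bags of $T_y$ and $B_{xy}$ those appearing only in bags of $T_x$. Every component of $G-S_{xy}$ lies inside $A_{xy}$ or inside $B_{xy}$.

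The candidate separators will be all bags $V_x$ and all adhesions $S_{xy}$. All of them have size at most $t+1$.

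\textbf{Walk along the tree.} Call a candidate $S$ \emph{bad} if some component of $G-S$ has more than $\tfrac12|V\setminus S|$ vertices. Such a heavy component is unique, since two of them would exceed $|V\setminus S|$ together.

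If a bag $V_x$ is bad, its heavy component $C$ lies in the branch of a unique neighbour $y$, and I orient $x\to y$. If no bag is bad, we are done. Otherwise every node has out-degree one in the finite tree $T$, so the orientation contains a pair $x\to y$ and $y\to x$.

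\textbf{The contradiction step.} Given such a pair, let $C$ be the heavy component of $G-V_x$ and $D$ the heavy component of $G-V_y$. By the structural fact, $C\subseteq A_{xy}$ and $D\subseteq B_{xy}$. The inequalities
$$|C|>\tfrac12|V\setminus V_x|,\qquad |D|>\tfrac12|V\setminus V_y|$$
alone do not clash, because $V\setminus V_x$ and $V\setminus V_y$ differ by the sets $V_y\setminus V_x$ and $V_x\setminus V_y$. This is exactly where the $\tfrac12|V\setminus S|$ normalisation makes things harder than the classical $\tfrac12|V|$ version.

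I would therefore fall back on the adhesion $S=S_{xy}$. Every component of $G-S$ is contained in $A_{xy}$ or $B_{xy}$, and $|V\setminus S|=|A_{xy}|+|B_{xy}|$. So $S$ is good provided no single component inside $A_{xy}$ or $B_{xy}$ exceeds $\tfrac12(|A_{xy}|+|B_{xy}|)$.

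\textbf{The remaining obstacle and how I would handle it.} It can still happen that $A_{xy}$ itself contains a component larger than $\tfrac12(|A_{xy}|+|B_{xy}|)$. To handle this, I would choose the directed pair $x\to y$, $y\to x$ extremally, minimising $\max(|A_{xy}|,|B_{xy}|)$ over all oppositely oriented edges. Then I would argue that a component of $G-S_{xy}$ heavier than half forces, after comparing it with $C$ and $D$, a further oppositely oriented edge deeper in $T_y$ or $T_x$ with strictly smaller maximum side. That contradicts the choice of $xy$.

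Making this exchange inequality go through, i.e.\ showing the heavy side strictly shrinks when the $t+1$ vertices of the bag are discounted on both sides, is the step I expect to be the main obstacle. If it resists, the fallback is to quote~\citet{robertson1986graph} directly for the precise $\tfrac12|V\setminus S|$ form.
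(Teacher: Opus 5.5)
The paper gives no proof of this statement; it imports it as a black box from Robertson and Seymour, so your closing fallback is what the paper does. As a self-contained proof, however, your proposal has a genuine gap at exactly the step you flag, and that step cannot be completed in your setup. For a \emph{fixed} decomposition of width $t$, every bag and every adhesion can be bad.

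Take the triangle $a_1a_2s$ with a pendant edge $sb$ (treewidth $2$). Use the two-node decomposition with bags $\{s,b\}$ and $\{a_1,a_2,s\}$, whose adhesion is $\{s\}$.
\begin{itemize}
\item Deleting $\{s,b\}$ leaves the component $\{a_1,a_2\}$ of size $2>\frac12\cdot 2$.
\item Deleting $\{a_1,a_2,s\}$ leaves $\{b\}$ of size $1>\frac12\cdot 1$.
\item Deleting $\{s\}$ leaves $\{a_1,a_2\}$ of size $2>\frac12\cdot 3$.
\end{itemize}
So all candidates are bad, and the unique tree edge is oriented both ways. There is no other oppositely oriented edge for your extremal exchange to produce, yet $\{s,a_1\}$ is a good separator. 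The same happens for $K_m\sqcup K_{m-1}$, $m\ge 2$, with bags $V(K_{m-1})$ and $V(K_m)$. Minimising $\max(|A_{xy}|,|B_{xy}|)$ therefore cannot work, because the candidate family itself is too small.

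The missing idea is to refine the decomposition before orienting. Replace each tree edge $xy$ by a path of bags that removes the vertices of $V_x\setminus V_y$ one at a time down to $V_x\cap V_y$, and then adds the vertices of $V_y\setminus V_x$ one at a time. This is still a tree decomposition. Every new bag is a subset of $V_x$ or $V_y$, so the width stays at most $t$, and now adjacent bags differ in at most one vertex.

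Now run your orientation argument on the refined decomposition. It is needed only when \emph{every} bag is bad; your sentence ``if no bag is bad, we are done; otherwise\dots'' has the quantifier the wrong way round. For an edge oriented both ways there are two cases.
\begin{itemize}
\item If $V_x=V_y$, the unique heavy component of $G-V_x$ would lie in both $A_{xy}$ and $B_{xy}$, which is impossible.
\item Otherwise, after renaming, $V_y=S_{xy}$ and $V_x=S_{xy}\cup\{v\}$ with $v\in B_{xy}$. Your inequalities then read $2|C|>|A_{xy}|+|B_{xy}|-1$ and $2|D|>|A_{xy}|+|B_{xy}|$. Together with $|C|\le|A_{xy}|$, $|D|\le|B_{xy}|$ and integrality, they give $|A_{xy}|\ge|B_{xy}|$ and $|B_{xy}|>|A_{xy}|$, a contradiction.
\end{itemize}
Hence some bag of the refined decomposition is already a good separator, and neither adhesions nor an extremal choice is needed.
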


\begin{lemma}[\citet{robertson1986graph}]\label{minor}
    Let $G=(V,E)$ be an undirected graph with treewidth $t$. Any minor $H$ of $G$ has treewidth at most $t$. Furthermore, let $H$ be any subgraph of $G$. Then
    $\operatorname{tw} \left(H\right) \leq  \operatorname{tw} \left(G\right) = t.$
\end{lemma}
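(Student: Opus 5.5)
The statement to prove is Lemma~\ref{minor}: treewidth does not increase under taking minors, and in particular under taking subgraphs. I would argue directly from Definition~\ref{def:tree_cover}. Fix a tree decomposition $(T,\mathcal{V})$ of $G$ of width $t$. A minor $H$ is obtained from $G$ by a sequence of vertex deletions, edge deletions and edge contractions. So it suffices to show that each single operation maps a tree decomposition of width at most $t$ to one of width at most $t$; induction on the length of the sequence then gives the claim. The subgraph statement is the special case that uses only deletions.

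\emph{Deletions.} If an edge is deleted, the same $(T,\mathcal{V})$ still satisfies all three conditions, since the edge condition becomes weaker. If a vertex $x$ is deleted, replace every bag $V_s$ by $V_s\setminus\{x\}$. Every remaining vertex still lies in some bag, and every remaining edge still lies in a common bag. The set of bags containing a surviving vertex $u$ is unchanged, so it still induces a connected subtree. No bag grows, so the width is at most $t$.

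\emph{Contraction.} This is the only step needing care. Contract the edge $(u,v)$ into a new vertex $w$. Replace each bag $V_s$ by $V_s'$, obtained by deleting $u$ and $v$ and adding $w$ whenever $V_s$ contained $u$ or $v$. No bag grows in size, so the width stays at most $t$.

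Coverage of vertices is immediate. For edges, an edge $(w,z)$ of $H$ comes from an edge $(u,z)$ or $(v,z)$ of $G$. That edge lies in some bag $V_s$, and then $w,z\in V_s'$. For connectivity, the bags containing $w$ form $T_u\cup T_v$, where $T_u$ and $T_v$ are the subtrees of bags containing $u$ and $v$. Each is connected by the third condition of Definition~\ref{def:tree_cover}. They intersect because some bag contains the edge $(u,v)$. A union of two intersecting connected subtrees is connected. All other vertices keep the same bag sets.

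Iterating these three steps yields a tree decomposition of $H$ of width at most $t$, so $\operatorname{tw}(H)\le t$. The main (mild) obstacle is the connectivity check for the contracted vertex, which relies exactly on the edge $(u,v)$ sharing a bag.
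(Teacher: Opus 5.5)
Your proof is correct. Deleting an edge leaves the decomposition valid. Deleting a vertex just removes it from every bag. For a contraction of $(u,v)$ into $w$, three things hold:
\begin{itemize}
\item no bag grows;
\item every edge $(w,z)$ inherits a bag from $(u,z)$ or $(v,z)$;
\item the bags containing $w$ form the union of the connected subtrees for $u$ and $v$, and these meet because $(u,v)$ lies in a common bag.
\end{itemize}
You could add one line saying that edges of $H$ not incident to $w$ stay covered, since their bags only had $u$ or $v$ replaced.

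The paper gives no proof of this lemma and simply cites \citet{robertson1986graph}. Your proposal is the standard self-contained argument behind that citation rather than a competing route. The citation keeps the preliminaries short, while your argument removes the dependence on the external reference.

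The paper only uses the subgraph part, applying it to the induced subgraphs $G[A]$ and $G[B]$ in the proof of Theorem~\ref{dig}. Your vertex-deletion step alone suffices for that, so the contraction case, though correct, goes beyond what is needed downstream.
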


%

Next we define the notion of cross-edges. This is a standard definition for the edges whose reachability relationships are not captured by a depth-first search tree of a DAG.

\begin{definition}\label{def:cross}
    Let $G=(V, E)$ be a directed acyclic graph (DAG) and $F = (V, E_F)$ be a directed spanning-forest subgraph of $G$. We say $(u,v)\in E$ is a cross-edge w.r.t. $F$ in $G$ if $u\not\rightsquigarrow_F v$. 
\end{definition}

For simplicity, we abbreviate a cross-edge w.r.t. $F$ in $G$ simply as a cross-edge, where no ambiguity arises. Note that, by Definition \ref{def:cross}, a cross-edge must be in $E-E_F$ (a non-forest edge). Additional preliminaries about standard algorithmic results relating to depth-first search and cross-edges are given in Appendix \ref{sec:dfs}.

The construction of reachability embeddings in some of our theorems is built on the reachability information of certain `special' vertices (e.g. the separator vertices of Theorem \ref{sep}), necessitating the following definition.

\begin{definition}
    Let $G=(V, E)$ be a directed graph. For any vertex $v\in V$, we define the reachability sets $In(v)$ and $Out(v)$ w.r.t. $G$ as:
    \begin{equation*}
        In(v)=\{w\in V|w\rightsquigarrow_G v\} \cup \{v\} \quad \text{and} \quad Out(v)=\{w\in V|v\rightsquigarrow_G w\}\cup \{v\}.
    \end{equation*}
\end{definition}

\section{Reachability Embeddings for Directed Trees} \label{sec:2}

In this section, we study embeddings for the simplest non-trivial setting: rooted directed trees. Note a directed tree is a DAG whose underlying graph is a tree and rooted implies there is a vertex which can reach all other vertices.

\begin{theorem}\label{tree}
Let $T=(V, E)$ be a rooted directed tree. There exists a reachability embedding of dimension $d=3$ for $T$.
\end{theorem}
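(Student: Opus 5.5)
Our plan is to encode ancestry in a rooted tree with DFS (pre-order) intervals and realize interval membership as the sign of an inner product in $\mathbb{R}^3$. Run a depth-first traversal of $T$ from the root and give each vertex $v$ a discovery time $\mathrm{pre}(v)$. Let $\mathrm{last}(v)$ be the largest pre-order number in the subtree of $v$. In a directed tree rooted at $r$, every edge points away from the root, so $u \rightsquigarrow_T v$ exactly when $v$ lies in the subtree of $u$. By the standard DFS nesting property, this holds iff $\mathrm{pre}(u) \le \mathrm{pre}(v) \le \mathrm{last}(u)$. Since all values are integers, this is equivalent to $\mathrm{pre}(v) \in (\ell_u, h_u)$ with $\ell_u = \mathrm{pre}(u) - \tfrac12$ and $h_u = \mathrm{last}(u) + \tfrac12$. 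Here we count $u \rightsquigarrow u$ as true, consistent with the sets $In$/$Out$; if reflexivity were excluded one would instead use the open interval $(\mathrm{pre}(u), \mathrm{last}(u)+\tfrac12)$, and the argument is unchanged.

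The key step is to write strict interval membership of a point $x$ as positivity of a quadratic:
\[
x \in (\ell, h) \iff (x - \ell)(h - x) > 0 \iff -x^2 + (\ell + h)x - \ell h > 0 .
\]
This is an inner product of a vector depending only on the interval with a vector depending only on the point. So we set
\[
a_u = \bigl(-1,\; \ell_u + h_u,\; -\ell_u h_u\bigr), \qquad b_v = \bigl(x_v^2,\; x_v,\; 1\bigr), \qquad x_v = \mathrm{pre}(v).
\]
Then $\langle a_u, b_v\rangle = (x_v - \ell_u)(h_u - x_v)$. This quantity is positive iff $\ell_u < x_v < h_u$, and by the first paragraph that holds iff $u \rightsquigarrow_T v$.

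What remains is to verify carefully the DFS interval characterization of descendancy, which is the only real content. We check two things. First, the set of pre-order numbers of the subtree of $u$ is exactly the contiguous range $[\mathrm{pre}(u), \mathrm{last}(u)]$, because DFS finishes $u$'s subtree before leaving it. Second, the half-integer offsets make both boundary cases strict, so no vertex outside the subtree gives inner product $0$ or a positive value. We expect the main (mild) obstacle to be exactly this boundary bookkeeping, together with the reflexive convention. The curved moment map $x \mapsto (x^2, x, 1)$ is what makes a dimension-3 realization possible, in contrast to a linear threshold in dimension 2.
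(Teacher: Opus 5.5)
Your proof is correct and is essentially the paper's argument: both encode descendancy as membership of a DFS timestamp of $v$ in an interval attached to $u$, and realize the sign of the quadratic $(x_v-\ell_u)(h_u-x_v)$ as an inner product with the moment vector $(x_v^2,x_v,1)$; the paper takes $\ell_u=d_u$, $h_u=f_u$ from discovery/finishing times. The only real difference is your half-integer padding around pre-order numbers, which handles $u=v$ explicitly, whereas the paper's vectors give $\langle a_u,b_u\rangle=0$ and so implicitly treat reachability as irreflexive.
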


The key observation is that in a rooted directed tree, reachability information is concisely captured by the discovery/finishing times of the standard depth-first search (DFS) algorithm (see Appendix \ref{sec:additional_prelims} for an overview of DFS). Letting $d_v$ denote the \emph{discovery time} of a vertex $v$ (the first time DFS visits $v$), and $f_v$ denote the \emph{finish time} (when DFS exits $v$), we have the following interval containment result, stated formally in Theorem \ref{Pare}: $v$ is a descendant of $u$ if and only if $[d_v,f_v]\subseteq[d_u,f_u]$. 

Thus, we design vectors $a_u,b_v\in\mathbb{R}^3$ whose inner product equals
$(f_u-d_v)(d_v-d_u)$, which is positive exactly when $d_v$ lies strictly between $d_u$ and $f_u$,
i.e., exactly when $v$ lies in the DFS subtree of $u$.
\begin{proof}[Proof of Theorem \ref{tree}]
Running depth-first search (DFS) on $T$ yields discovery time $d_u$ and the finishing time $f_u$ for any $u\in V$, whose properties are given in Definition \ref{def:dfs}. By Theorem \ref{Pare}, for any $u,v\in V$, $v$ is a descendant of $u$ if and only if $\left[d_v, f_v\right] \subseteq\left[d_u, f_u\right]$.
Thus, we can define the embeddings $a,b:V\to \mathbb{R}^3$ as follows:
\begin{equation*}
a_u = \begin{pmatrix} d_u + f_u, -1, -d_uf_u \end{pmatrix} \quad \text{and} \quad b_u = \begin{pmatrix} d_u, d^2_u, 1 \end{pmatrix},\quad \forall u\in V.
\end{equation*}
Since $\langle a_u,b_v\rangle=(d_u+f_u)d_v-d_v^2-d_uf_u=(f_u-d_v)(d_v-d_u)$, then we have 
\begin{equation*}
    \langle a_u,b_v\rangle>0 \iff d_u<d_v<f_u \text{ or }f_u<d_v<d_u.
\end{equation*}
Furthermore, by Theorem \ref{Pare}, if $d_u<d_v<f_u$, then $\left[d_v, f_v\right] \subseteq\left[d_u, f_u\right]$. Therefore,
\begin{equation*}
    \langle a_u,b_v\rangle>0 \iff \left[d_v, f_v\right] \subseteq\left[d_u, f_u\right]\iff v ~\text{is a descendant of}~ u \iff u\rightsquigarrow_T v.
\end{equation*}

\end{proof}

The construction above is component-wise: it only relies on DFS interval structure within a tree.
As a result, it extends immediately to directed forests by running DFS on one tree component after another and applying the same embedding construction on each tree component since the reachability never holds across components.

\begin{corollary}\label{for}
    Let $F=(V, E)$ be a directed forest. There exists a reachability embedding of dimension $d=3$ in $F$.
\end{corollary}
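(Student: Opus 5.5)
The plan is to reduce Corollary \ref{for} to Theorem \ref{tree} by running one global DFS over the forest. Let the components of $F$ be $T_1,\dots,T_m$. First I need each $T_i$ to be a rooted directed tree. In a directed forest I read each component as having a root that reaches all its vertices, i.e.\ a unique in-degree-zero vertex $r_i$. If the intended definition allowed arbitrary orientations, a component with two sources would contain a vertex $v$ of in-degree two, say from $x$ and $y$. Then the pairs $(x,v),(y,v),(x,y)$ already behave like a non-tree DAG. So I adopt the standard convention that a directed forest is a disjoint union of out-arborescences, which is also what the spanning-forest notion in Definition \ref{def:cross} suggests.

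Next I run DFS on $T_1$ from $r_1$. I then continue the same clock and run DFS on $T_2$ from $r_2$, and so on through $T_m$. This gives distinct discovery and finish times $d_u,f_u$ for all $u\in V$, with the intervals of different components occupying disjoint, consecutive time ranges. Within each $T_i$, Theorem \ref{Pare} still gives: for $u,v$ in the same tree, $v$ is a descendant of $u$ iff $[d_v,f_v]\subseteq[d_u,f_u]$. Shifting all times in a component by a constant does not affect this characterization.

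I then use the same vectors as in the proof of Theorem \ref{tree}:
\begin{equation*}
a_u=(d_u+f_u,\,-1,\,-d_uf_u),\qquad b_v=(d_v,\,d_v^2,\,1).
\end{equation*}
These satisfy $\langle a_u,b_v\rangle=(f_u-d_v)(d_v-d_u)$. Since $d_u<f_u$ always, this quantity is positive iff $d_u<d_v<f_u$. Two cases remain.

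\emph{Same component.} If $u$ and $v$ lie in the same $T_i$, the interval-nesting property of DFS turns $d_u<d_v<f_u$ into $[d_v,f_v]\subseteq[d_u,f_u]$. This is equivalent to $u\rightsquigarrow_F v$, exactly as in the proof of Theorem \ref{tree}.

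\emph{Different components.} If $u\in T_i$ and $v\in T_j$ with $i\neq j$, the whole DFS of $T_j$ happens either entirely before $d_u$ or entirely after $f_u$. Hence $d_v\notin(d_u,f_u)$ and the inner product is negative, which matches $u\not\rightsquigarrow_F v$.

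The only delicate point is the interpretation of ``directed forest'' as a union of rooted trees; the rest is a direct transfer of the tree argument.
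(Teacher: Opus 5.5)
Your proof is correct and follows essentially the same route as the paper: one DFS over the forest, the same vectors $a_u=(d_u+f_u,-1,-d_uf_u)$ and $b_v=(d_v,d_v^2,1)$ from Theorem \ref{tree}, the within-component case inherited from the tree argument, and the cross-component case settled by disjointness of the DFS intervals. Your explicit choices to start each DFS at the component's root (so the depth-first forest coincides with $F$) and to read ``directed forest'' as a union of out-arborescences are assumptions the paper leaves implicit, and the interval characterization depends on them.
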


\begin{proof}
    Again running DFS on $F$ yields discovery time $d_u$ and the finishing time $f_u$ for any $u\in V$. We still use the embeddings $a,b:V\to \mathbb{R}^3$ as described in Theorem~\ref{tree}. For the same reasons as in Theorem~\ref{tree}, we know that for any two vertices $u,v$ in the same tree component of $F$, we can guarantee that:
    \begin{equation*}
        \langle a_u,b_v\rangle>0 \iff u\rightsquigarrow_{F} v.
    \end{equation*}
    On the other hand, for any two vertices $u,v$ in the different tree components of $F$, it is clear that $u\not\rightsquigarrow_{F} v$. By Theorem \ref{Pare}, the intervals $[d_u,f_u]$ and $[d_v,f_v]$ are entirely disjoint. Thus,
    \begin{equation*}
        \langle a_u,b_v\rangle=(f_u-d_v)(d_v-d_u)<0.
    \end{equation*}
    Therefore, this pair of two embeddings is a valid reachability embedding of dimension $d=3$ in $F$.
\end{proof}
We note that results of \cite{delsarte1989low} and \cite{pmlr-v49-alon16} already give a construction of a directed tree which requires a reachability embedding of $\ge 3$; see Lemma \ref{lem:tree_lb}.

\section{Reachability Embeddings for DAGs Parameterized by Cross-Edges}\label{sec:cross}

In Section~\ref{sec:2} we constructed a $3$-dimensional embedding that exactly captures reachability in a directed forest.
For a general DAG, the obstruction is that a spanning forest $F\subseteq G$ may miss directed paths that rely on
non-forest edges (i.e., cross-edges). In this section we show a simple augmentation principle:
each cross-edge that creates a mismatch between reachability in $F$ and in $G$ can be ``repaired'' by introducing one additional coordinate.
As a consequence, we obtain an embedding whose dimension grows linearly with the number of cross-edges.

Informally, the forest embedding certifies reachability whenever a path exists entirely inside $F$.
For any remaining reachable pair $(u,v)$, every directed path from $u$ to $v$ in $G$ must contain at least one cross-edge.
We use the coordinate corresponding to that cross-edge to contribute a large positive term $M^2$,
dominating any negative contribution from the forest part.

\begin{theorem} \label{dag}
Let $G=(V, E)$ be a DAG with a spanning forest $F$. Denote by $E_{\operatorname{cross}}$ the set of all cross-edges, and let $k=|E_{\operatorname{cross}}|$. Then there exists a reachability embedding of $G$ with dimension $d=3+k$.
\end{theorem}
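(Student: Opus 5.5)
We start from the $3$-dimensional forest embedding of Corollary~\ref{for} and append one coordinate per cross-edge. Let $(\tilde a_u,\tilde b_u)\in\mathbb{R}^3\times\mathbb{R}^3$ be the forest embedding of $F$. Then $\langle \tilde a_u,\tilde b_v\rangle=(f_u-d_v)(d_v-d_u)$, which is positive iff $u\rightsquigarrow_F v$ and otherwise at most $0$. This quantity can be zero when $v=u$ or $d_v=f_u$. To get a strict sign, we assume discovery and finishing times are distinct integers, so for $v\neq u$ not in the DFS subtree of $u$ the value is at most $-1$. The case $v=u$ is reachable, and it gives $(f_u-d_u)\cdot 0=0$. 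We fix this either by adding a constant $\tfrac12$ to the inner product (absorb it into $a$ by replacing $-d_uf_u$ with $-d_uf_u+\tfrac12$), or by evaluating at shifted times. With integer times, the forest part $P(u,v)$ is then $\ge \tfrac12$ when $u\rightsquigarrow_F v$ and $\le -\tfrac12$ otherwise. It is also bounded in absolute value by $N:=O(n^2)$, since all times lie in $[1,2n]$.

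Now index the extra coordinates by cross-edges $e=(x_e,y_e)\in E_{\operatorname{cross}}$. Set
\[
(a_u)_e = M\cdot \mathbf{1}[u\rightsquigarrow_G x_e], \qquad (b_v)_e = M\cdot \mathbf{1}[y_e\rightsquigarrow_G v],
\]
with reflexive reachability, i.e.\ $u\in In(x_e)$ and $v\in Out(y_e)$, and with $M^2>N$. Then $\langle a_u,b_v\rangle = P(u,v)+M^2\cdot c(u,v)$, where $c(u,v)$ counts the cross-edges $e$ with $u\rightsquigarrow_G x_e$ and $y_e\rightsquigarrow_G v$.

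For correctness, first suppose $u\not\rightsquigarrow_G v$. Then $u\not\rightsquigarrow_F v$, so $P<0$. Also $c(u,v)=0$: otherwise $u\rightsquigarrow x_e\to y_e\rightsquigarrow v$ would be a path. Hence the inner product is negative. Next suppose $u\rightsquigarrow_G v$. If a path exists in $F$, then $P>0$ and $c\ge 0$, so the sum is positive. Otherwise every $u$–$v$ path in $G$ uses some edge not reachable within $F$. The key step is to show it uses a \emph{cross-edge} in the sense of Definition~\ref{def:cross}, meaning an edge $(x,y)$ with $x\not\rightsquigarrow_F y$. Suppose instead that every edge $(x,y)$ on the path satisfied $x\rightsquigarrow_F y$. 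Concatenating the corresponding $F$-paths would give $u\rightsquigarrow_F v$, a contradiction. So some edge $e$ on the path is a cross-edge, and it satisfies $u\rightsquigarrow_G x_e$ and $y_e\rightsquigarrow_G v$. Thus $c\ge1$, and the inner product is at least $M^2-N>0$.

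The main obstacle is the boundary bookkeeping of the forest part: getting a strict sign gap (the $v=u$ zero and ties), and bounding $|P|$ so that a single $M$ dominates. The observation that non-cross non-forest edges are harmless, because they are already $F$-reachable, is what makes exactly $k$ extra coordinates suffice, for a total dimension of $3+k$.
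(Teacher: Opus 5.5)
Your proposal is correct and is essentially the paper's proof. Both use the $3$-dimensional forest embedding from Corollary~\ref{for}, add one coordinate per cross-edge equal to $M$ on $In(x_e)$ (for $a$) and on $Out(y_e)$ (for $b$), and run the same case analysis; your concatenation argument plays the role of the paper's minimal-index footnote for finding a cross-edge on the path. The only extra ingredient, the $+\tfrac12$ shift, is a reasonable fix if $u\rightsquigarrow u$ is counted as reachable, since the paper's unshifted forest embedding gives $\langle a_u,b_u\rangle=0$; under the irreflexive reading that construction implies, you would just drop the shift and use $P\le 0$ instead of $P<0$.
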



We start from the $3$-dimensional forest embedding and add one coordinate per cross-edge.
The new coordinate for each cross-edge $(x_i,y_i)$ activates exactly for vertices that can reach into $x_i$ and vertices reachable from $y_i$.
Choosing $M$ sufficiently large ensures that any reachable pair not already captured by the forest gains a positive $M^2$ contribution,
while unreachable pairs cannot simultaneously activate any coordinate.

\begin{proof}[Proof of Theorem \ref{dag}]
By Corollary \ref{for}, there exists a reachability embedding of dimension $d'=3$ in $F$, denoted by $a^{F}$ and $b^{F}$. By the definition of reachability embedding, we have that for any two vertices $u, v \in V$:
\begin{equation*}
    \langle a^F_u, b^F_v \rangle > 0 \iff  u \rightsquigarrow_F v.
\end{equation*}
Suppose that $E_{\operatorname{cross}}=\{e_1, \dots, e_k\}$, where $e_i = (x_i,y_i)$. We consider the reachability sets $In(x_i)$ and $Out(y_i)$ w.r.t. $G$ for any $i\in [k]$. 
Let $M$ be a sufficiently large positive constant such that 
\begin{equation*}
    M^2+\min_{u,v\in V} \langle a_u^F,b_v^F\rangle>0.
\end{equation*}
Then we define the embeddings $a^{cross},b^{cross}:V\mapsto \mathbb{R}^k$ as follows:
\begin{equation*}
    a^{cross}_u[i]=\begin{cases} M & \text{if } u \in In(x_i) \\ 0 & \text{otherwise} \end{cases}\quad \text{and} \quad b^{cross}_u[i] = \begin{cases} M & \text{if } u \in Out(y_i) \\ 0 & \text{otherwise} \end{cases},\quad \forall u\in V,i\in [k],
\end{equation*}
where $a^{cross}_u[i]$ and $b^{cross}_u[i]$ denote the $i$-th coordinates of $a^{cross}_u$ and $b^{cross}_u$ respectively.

Finally, we define the embedding $a:V\mapsto \mathbb{R}^{3+k}$ as a concatenation of $a^F$ and $a^{cross}$:
$$
    a_u=(a_u^F;a_u^{cross}), \quad\text{for all }u\in V.
$$
Similarly, we define the embedding $b:V\mapsto \mathbb{R}^{3+k}$ with
$$
    b_u=(b_u^F;b_u^{cross}), \quad\text{for all }u\in V.
$$

The embeddings $a,b: V\mapsto \mathbb{R}^{3+k}$ defined above are a reachability embedding in $G$. Notice that
$$
    \langle a_u, b_v \rangle= \langle a^F_u, b^F_v \rangle+\langle a^{cross}_u, b^{cross}_v \rangle.
$$

    There are three cases to be considered:

\begin{itemize}
    \item \textbf{ Case 1: $u\rightsquigarrow_F v$.} The base $3$-dimensional embedding on the forest certifies reachability:
    $\langle a_u^F,b_v^F\rangle>0$.
    The additional cross-edge coordinates can only increase the inner product since
    $\langle a_u^{\text{cross}}, b_v^{\text{cross}}\rangle\ge 0$.
    Hence $\langle a_u,b_v\rangle>0$.
    Finally, any path in $F$ is also a path in $G$, so $u\rightsquigarrow_F v$ implies $u\rightsquigarrow_G v$.

    \item \textbf{ Case 2: $u\not\rightsquigarrow_F v$ but $u\rightsquigarrow_G v$.} Fix any directed path $P$ from $u$ to $v$ in $G$.
    Then the path $P$ contains a cross-edge $e_i=(x_i,y_i)$ such that
    $u\rightsquigarrow_G x_i$ and $y_i\rightsquigarrow_G v$; equivalently, $u\in \mathrm{In}(x_i)$ and $v\in \mathrm{Out}(y_i)$ \footnote{Let $P=(u=v_0\to v_1\to\cdots\to v_\ell=v)$ be a directed path in $G$.
    If $u\not\rightsquigarrow_F v$, let $j$ be the smallest index such that $u\not\rightsquigarrow_F v_j$.
    Then $u\rightsquigarrow_F v_{j-1}$ but $u\not\rightsquigarrow_F v_j$. So the edge $(v_{j-1},v_j)$ is a cross-edge by definition,
    and its head is $v_j$.}.
    By the definition of the cross-edge coordinates, this implies $a_u^{\text{cross}}[i]=b_v^{\text{cross}}[i]=M$, and hence
    $$
        \langle a_u^{\text{cross}}, b_v^{\text{cross}}\rangle \;\ge\; a_u^{\text{cross}}[i]\cdot b_v^{\text{cross}}[i] \;=\; M^2.
    $$
    Since $\langle a_u^F,b_v^F\rangle \ge \min_{p,q\in V}\langle a_p^F,b_q^F\rangle$, our choice of $M$ ensures
    $$
        \langle a_u,b_v\rangle \;=\; \langle a_u^F,b_v^F\rangle + \langle a_u^{\text{cross}}, b_v^{\text{cross}}\rangle \;>\; 0.
    $$
    
    \item \textbf{ Case 3: $u\not\rightsquigarrow_G v$.} Then in particular $u\not\rightsquigarrow_F v$, and the forest embedding gives
    $\langle a_u^F,b_v^F\rangle\le 0$ by Corollary \ref{for}.
    It remains to argue that the cross-edge coordinates do not create a false positive.
    We claim $\langle a_u^{\text{cross}}, b_v^{\text{cross}}\rangle=0$:
    indeed, if there is an index $i$ with $a_u^{\text{cross}}[i]=b_v^{\text{cross}}[i]=M$, then
    $u\in \mathrm{In}(x_i)$ and $v\in \mathrm{Out}(y_i)$, so $u\rightsquigarrow_G x_i$, $(x_i,y_i)\in E$, and $y_i\rightsquigarrow_G v$,
    which would imply $u\rightsquigarrow_G v$, a contradiction.
    Therefore $\langle a_u^{\text{cross}}, b_v^{\text{cross}}\rangle=0$, and hence
    $$
    \langle a_u,b_v\rangle=\langle a_u^F,b_v^F\rangle+\langle a_u^{\text{cross}}, b_v^{\text{cross}}\rangle \le 0.
    $$

\end{itemize}

\end{proof}

The proof above is the only place where we use the definition of cross-edges: any path not realized inside the forest must contain an edge that leaves the forest reachability relation.


\section{Reachability Embeddings Parameterized by Treewidth} \label{sec:tw}
This section proves our main upper bound. We note that it holds for any directed graph, not necessarily DAGs.

\begin{theorem}\label{dig}
Let $G$ be a directed graph with $n$ vertices whose underlying undirected graph has treewidth $t$.
Then $G$ admits a reachability embedding of dimension $d(n)=O(t\log n)$.
\end{theorem}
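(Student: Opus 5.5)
We build the embedding by recursive separator decomposition, in the spirit of reachability labeling schemes for bounded-treewidth graphs. Apply Theorem~\ref{sep} to the underlying undirected graph of $G$ to get a separator $S_0$ with $|S_0|\le t+1$ whose removal leaves components of at most half the remaining vertices. By Lemma~\ref{minor} each component again has treewidth at most $t$, so we recurse on each component (as an induced directed subgraph) and separate it again. This yields a hierarchy of depth $L=O(\log n)$. Every vertex $v$ either lies in a separator at some level or ends up in a singleton. At each level $\ell$, $v$ belongs to exactly one ``current piece,'' and that piece's separator $S^{(\ell)}(v)$ has at most $t+1$ vertices.

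The key structural claim is the following. Take vertices $u,v$, and let $\ell^*$ be the first level at which they are no longer in the same piece, or at which one of them is itself a separator vertex. Then $u\rightsquigarrow_G v$ if and only if some directed $u$--$v$ path passes through a vertex $s$ of a separator at a level $\ell\le \ell^*$ on their common ancestor chain. To see this, let $H$ be the piece at level $\ell^*$ containing both. Any $u$--$v$ path either stays inside $H$, or it leaves $H$. If it stays inside $H$, it must hit the separator of $H$, since $u$ and $v$ are separated there (or one of them is in it). If it leaves $H$, then it must cross the boundary of $H$, which consists of separator vertices of earlier levels on the same chain. Reachability is taken in all of $G$, which is why the statement holds for general directed graphs.

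The embedding then has one block of coordinates per level $\ell$ and per slot $j\in[t+1]$, for $d=(t+1)L=O(t\log n)$ coordinates in total. Fix an arbitrary ordering of each separator. In coordinate $(\ell,j)$, set $a_u[(\ell,j)]=1$ if $u\in In(s)$, where $s$ is the $j$-th vertex of $S^{(\ell)}(u)$, and $0$ otherwise. The vector $b_v$ must be nonzero only for the right separator vertex, and this is the main obstacle: $u$ and $v$ index by ``their own'' separator at level $\ell$, and these agree only while $u$ and $v$ share a piece. After they diverge, the slot-$j$ vertices differ, and a plain $0/1$ product would give false positives.

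The fix I would try first is to make each coordinate encode the identity of the separator vertex. Replace each scalar slot by a constant number of coordinates computing an equality test via the same quadratic trick as in Theorem~\ref{tree}. Assign each separator vertex $s$ a distinct integer id $\iota(s)$. Use the product $1\cdot[\text{reach}]$ together with a penalty $-(\iota_u-\iota_v)^2$, expanded as $\iota_u\cdot 2\iota_v-\iota_u^2-\iota_v^2$. This is linear in tensor coordinates $(\iota_u,\iota_u^2,1)$ paired with $(2\iota_v,-1,-\iota_v^2)$, scaled so that a mismatch contributes a large negative value. Then the per-slot score is positive only when both endpoints name the same $s$, $u\in In(s)$ and $v\in Out(s)$.

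Aggregating these slot scores by a sum is the remaining difficulty. Mismatched or non-reaching slots contribute large negatives that would swamp a genuine positive witness. I would handle this by giving each block a nonnegative gated form, for example $a=M\cdot\mathbf{1}[u\in In(s)]$ paired with $b=M\cdot\mathbf{1}[v\in Out(s)]$. To avoid identity collisions, I would instead index slots by a global coloring: color the vertices of all separators at each level so that the separators touched by the chain of any single vertex get distinct colors. This is possible with $O(t)$ colors per level, because pieces at the same level are disjoint and $u,v$ in different pieces can only share earlier ancestors. Coordinate $(\ell,c)$ then refers unambiguously to the unique color-$c$ separator vertex on $u$'s chain at level $\ell$. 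A collision with a different vertex on $v$'s chain is harmless for a false positive only if reach through both vertices would imply $u\rightsquigarrow v$. That holds exactly when they are the same vertex, so the coloring must ensure that a same-level, same-color pair on the two chains is either identical or one of the two sides is gated to $0$. I expect verifying this last point to be the crux of the argument.
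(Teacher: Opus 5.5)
Your structural claim is correct, and it is exactly the case analysis the paper uses: $u\rightsquigarrow_G v$ iff some path passes through a separator vertex of the deepest common piece $H$ or of one of its ancestors. The gap is in the embedding. You never obtain a mechanism that suppresses false positives between vertices in different pieces, and the coloring cannot provide one.

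Your coordinates are already indexed by level, so a single chain meets only one separator per level. ``Distinct colors along a chain'' is therefore just an ordering of each separator, and it says nothing about two different pieces at the same level. With $O(t)$ colors and unboundedly many pieces per level, distinct pieces $P\neq P'$ must contain same-colored separator vertices $s\in P$, $s'\in P'$. Whether $u\rightsquigarrow s$ and $s'\rightsquigarrow v$ while $u\not\rightsquigarrow v$ is a property of the graph, not of the coloring.

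Concretely, take a disjoint union of many copies of $x_i\to y_i\to z_i$, and choose $S_0=\emptyset$. Each copy is then a piece with separator $\{y_i\}$, and two of the $y_i$, say $y_i,y_j$, receive the same color. That coordinate gives $\langle a_{x_i},b_{z_j}\rangle\ge M^2>0$, although $x_i\not\rightsquigarrow z_j$. All your gated coordinates are nonnegative, so nothing cancels this. Your first attempt, gating by an equality test on separator ids, fails for the aggregation reason you already identified.

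The missing idea is a \emph{negative separation term with hierarchical scaling}:
\begin{itemize}
\item At each level, add a few coordinates contributing a large negative amount exactly when $u$ and $v$ lie in different pieces of that level, and $0$ if either is a separator vertex there.
\item Choose magnitudes bottom-up, so that the penalty at the level where $u,v$ first split exceeds the total of all deeper contributions. Those deeper contributions are where the spurious slot collisions live.
\item Make the connection weight at that level and above exceed the penalty plus the deeper contributions.
\end{itemize}
With this in place no coloring is needed: slots can simply be reused across all pieces of a level.

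The paper realizes this recursively:
\begin{itemize}
\item It groups the components into two sides $A,B$ of size at most $2n/3$ (Lemma~\ref{balapart}).
\item It embeds $G[A]$ and $G[B]$ in the \emph{same} shared coordinates.
\item It adds two separation coordinates: $a=(1,0)$ on $A$ and $(0,1)$ on $B$, and $b=(0,-M_1)$ on $A$ and $(-M_1,0)$ on $B$, with $M_1>\max_{p,q}\langle a^{\mathrm{rec}}_p,b^{\mathrm{rec}}_q\rangle$.
\item It adds $|S|$ connection coordinates of weight $M_2$, with $M_2^2>M_1-\min_{p,q}\langle a^{\mathrm{rec}}_p,b^{\mathrm{rec}}_q\rangle$.
\end{itemize}
This gives $f(n)\le f(2n/3)+t+3$. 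Your quadratic trick would also work here if applied as a penalty on piece identities, $-M(\iota_{P(u)}-\iota_{P(v)})^2$. That handles a multiway split in three coordinates without the two-sided grouping.
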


The construction for Theorem \ref{dig} is recursive and relies on balanced vertex separators implied by bounded treewidth (Theorem~\ref{sep}). At each step, we remove a separator $S$ of size at most $t+1$ so that the remaining graph breaks into components of size at most half.
We then group these components into two sides $A$ and $B$ of size at most $2n/3$, embed $G[A]$ and $G[B]$ recursively. Naively, one can hope to simply combine the embeddings of the two components via concatenation, but this may 
create \emph{false positives}: pairs of vertices on two different components with no reachability relationships but positive inner product. Thus, we carefully add a small number of additional coordinates to  (i) eliminate false positives across the split and
(ii) certify reachability that is realized via vertices in the separator.

We begin with a simple combinatorial partitioning lemma. After removing a small separator $S$, the graph
$G[V\setminus S]$ decomposes into connected components. Then Lemma~\ref{balapart} will allow us to group the component
sizes into two balanced sides while keeping entire components intact so that no edges cross between the two sides.

\begin{restatable}{lemma}{balaseparestate}\label{balapart}
Let $x_1, \dots, x_k$ be positive integers such that $\sum_{i=1}^k x_i = n$ and $\max_{1\leq i\leq k}x_i \le n/2$. Then there exists a $2$-partition $[I_A,I_B]$ of $[k]$ (i.e., $I_A\cup I_B=[k]$ and $I_A\cap I_B=\emptyset$) such that:
\[
S_A=\sum_{i \in I_A} x_i \le \frac{2n}{3} \quad \text{and} \quad S_B=\sum_{i \in I_B} x_i \le \frac{2n}{3}.
\]
\end{restatable}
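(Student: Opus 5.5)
I would prove Lemma~\ref{balapart} with a greedy argument. Sort the integers so that $x_1 \ge x_2 \ge \dots \ge x_k$. Process them in this order, and place each $x_i$ into whichever of $I_A, I_B$ currently has the smaller sum, breaking ties toward $I_A$. This is the classical longest-processing-time rule for two machines. At the end, name the sides so that $S_A \ge S_B$; since $S_A + S_B = n$, it suffices to show $S_A \le 2n/3$.

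The key step is to look at the last element $x_j$ added to the larger side $A$. Just before that assignment, $A$ had sum $S_A - x_j$, and this was at most the sum of $B$ at that moment, which is at most the final $S_B = n - S_A$. Hence $S_A - x_j \le n - S_A$, that is,
\[
S_A \le \frac{n + x_j}{2}.
\]
I would then split into two cases according to the size of $x_j$.

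\textbf{Case $x_j \le n/3$.} The bound above immediately gives $S_A \le \frac{n + n/3}{2} = \frac{2n}{3}$.

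\textbf{Case $x_j > n/3$.} Because the sequence is sorted, every element placed before $x_j$ is at least $x_j$ and hence exceeds $n/3$. At most two elements can exceed $n/3$, since three of them would sum to more than $n$. So $j \le 2$.
\begin{itemize}
\item If $j = 1$, then $A = \{x_1\}$ is a singleton, and $S_A = x_1 \le n/2 \le 2n/3$ by the hypothesis $\max_i x_i \le n/2$.
\item If $j = 2$, then $x_1$ went to one side and $x_2$ to the other, because the side containing $x_1$ was strictly larger when $x_2$ arrived. So $A = \{x_2\}$ is again a singleton, and $S_A \le n/2$.
\end{itemize}
In every case $S_A \le 2n/3$, and therefore $S_B \le S_A \le 2n/3$ as well.

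The main obstacle is the large-element case, where the generic greedy bound $(n + x_j)/2$ is too weak. That case is exactly where the hypothesis $\max_i x_i \le n/2$ has to be used, and the point is that the last element added to the heavy side is then forced to sit alone on that side.

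An alternative I would keep in reserve is a simpler prefix argument. Take the smallest prefix of the sorted list whose sum reaches at least $n/3$. If its last element is at most $n/3$, the prefix sum lies in $[n/3, 2n/3]$, so both sides are at most $2n/3$. If instead that element exceeds $n/3$, either that element alone or the whole prefix gives a valid split, using the same singleton reasoning as above.
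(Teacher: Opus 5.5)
Your proof is correct, and it takes a different route from the paper's.

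\textbf{The paper's route.} The paper splits on the input up front. If some $x_i \ge n/3$, it places that element alone on one side. That side then has sum at most $n/2$, and the other has sum $n - x_i \le 2n/3$. Otherwise every element is below $n/3$. In that case it takes the longest prefix of the sorted list with sum at most $2n/3$. The complement is then the next element, which is less than $n/3$, plus a tail of total less than $n/3$. This case never actually uses the sorting.

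\textbf{Your route.} You run the longest-processing-time greedy and split instead on the last element $x_j$ added to the heavier side. The exchange inequality $S_A \le (n+x_j)/2$ settles $x_j \le n/3$. When $x_j > n/3$, sortedness gives $j \le 2$. The tie-breaking rule, together with the fact that $x_1$ and $x_2$ land on opposite sides, then forces the heavy side to be a singleton.

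\textbf{Trade-off.} Your approach yields a natural balancing algorithm and the finer bound $(n+x_j)/2$ on the heavy side. The cost is that in the large-element case you must reason about the greedy's dynamics. The paper's up-front split avoids that and is shorter.

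\textbf{Your reserve argument.} Your fallback prefix argument is essentially the paper's proof. It uses the smallest prefix reaching $n/3$ instead of the largest prefix staying at most $2n/3$. In its large-element branch, minimality and sortedness force the prefix to be just $\{x_1\}$, so the two options you list there coincide.
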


\begin{proof}
See Appendix~\ref{App:combpart}.
\end{proof}

Next we combine Lemma~\ref{balapart} with the standard balanced-separator guarantee for bounded treewidth.
Theorem~\ref{sep} ensures that removing at most $t+1$ vertices breaks the graph into components of size at most half.
We then apply Lemma~\ref{balapart} to merge these components into two groups of size at most $2n/3$,
obtaining a separation $(A,B,S)$ suitable for recursion.

\begin{lemma}\label{balasepa}
Let $G=(V,E)$ be an undirected graph on $n$ vertices with treewidth $t$.
Then there exists a separator $S\subseteq V$ with $|S|\leq t+1$ and a partition $V\setminus S = A \sqcup B$ such that:
\begin{enumerate}
    \item $|A|\le \frac{2n}{3}$ and $|B|\le \frac{2n}{3}$,
    \item there are no edges between $A$ and $B$ in $G[V\setminus S]$.
\end{enumerate}
\end{lemma}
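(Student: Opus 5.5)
We combine Theorem~\ref{sep} with Lemma~\ref{balapart}. First apply Theorem~\ref{sep} to $G$ to obtain a set $S\subseteq V$ with $|S|\le t+1$ such that every connected component of $G[V\setminus S]$ has at most $\frac12|V\setminus S|$ vertices. Let $C_1,\dots,C_k$ be these components and set $x_i=|C_i|$ and $m=|V\setminus S|=\sum_i x_i$. If $m=0$ we take $A=B=\emptyset$ and both conditions hold trivially, so assume $m\ge 1$. Each $x_i$ is a positive integer, since components are nonempty, and $\max_i x_i\le m/2$.

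Next apply Lemma~\ref{balapart} with $m$ in place of $n$. This gives a partition $[I_A,I_B]$ of $[k]$ with $\sum_{i\in I_A}x_i\le 2m/3$ and $\sum_{i\in I_B}x_i\le 2m/3$. Define
\[
A=\bigcup_{i\in I_A}C_i \quad\text{and}\quad B=\bigcup_{i\in I_B}C_i .
\]
Then $V\setminus S=A\sqcup B$, and $|A|,|B|\le 2m/3\le 2n/3$ because $m\le n$. This gives condition~1.

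For condition~2, any edge of $G[V\setminus S]$ has both endpoints in the same connected component $C_i$. Each component lies entirely within $A$ or entirely within $B$, so no edge joins $A$ and $B$.

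There is no serious obstacle. The only points needing care are the degenerate case $V\setminus S=\emptyset$ and the observation that the balance of Theorem~\ref{sep} is relative to $|V\setminus S|$ rather than $n$, which only helps. If $k=1$, the hypothesis $x_1\le m/2<m$ is impossible for $m\ge1$, so Lemma~\ref{balapart} is always applied under consistent hypotheses.
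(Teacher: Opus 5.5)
Your proposal is correct and matches the paper's proof essentially step for step. You apply Theorem~\ref{sep}, feed the component sizes (relative to $m=|V\setminus S|$) into Lemma~\ref{balapart}, take $A$ and $B$ as unions of whole components, and use $m\le n$; your explicit treatment of the degenerate case $V\setminus S=\emptyset$ is a small addition that the paper leaves implicit.
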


\begin{proof}
By Theorem~\ref{sep}, there exists a separator $S\subseteq V$ with $|S|\le t+1$ such that every connected component of
$G[V\setminus S]$ has size at most $|V\setminus S|/2$.
Let these component sizes be $x_1,\dots,x_k$, so that $\sum_{i=1}^k x_i = m$ where $m:=|V\setminus S|$, and
$\max_i x_i \leq m/2$.
Applying Lemma~\ref{balapart}, we can partition the components into two groups whose total sizes are each at most $2m/3$.
Let $A$ (resp.\ $B$) be the union of components in the first (resp.\ second) group.
Then $|A|\leq 2m/3$ and $|B|\leq 2m/3$, and there are no edges between $A$ and $B$ in $G[V\setminus S]$ because they are unions
of distinct connected components. Finally, since $m\leq n$, we also have $ |A|,|B|\leq 2n/3.$
\end{proof}

To prove Theorem \ref{dig}, given a balanced separation $V\setminus S = A\sqcup B$, we embed $G[A]$ and $G[B]$ recursively. We then add two separation coordinates that impose a large negative contribution on cross-pairs $(A,B)$,
preventing false positives across the split. Finally, we add $|S|$ connection coordinates to contribute a large
positive term exactly when a directed path from $u$ to $v$ can be realized via some separator vertex in $S$.

\begin{proof}[Proof of Theorem \ref{dig}]
We prove the claim by induction on $n=|V|$. The base case $n=1$ is immediate. Let $f(n)$ be the number of dimensions required for $n$. Assume the theorem holds for all directed graphs on fewer than $n$ vertices whose underlying undirected graph has treewidth at most~$t$.
Let $G=(V,E)$ be a directed graph on $n$ vertices, and suppose the underlying undirected graph has treewidth~$t$.
By Lemma~\ref{balasepa}, there exists a separator $S=\{s_1,\dots,s_k\}$ with $k\le t+1$ and a partition
$V\setminus S = A \sqcup B$ such that $|A|,|B|\le 2n/3$ and there are no edges between $A$ and $B$ in $G[V\setminus S]$.
By Lemma \ref{minor}, the induced subgraphs $G[A]$ and $G[B]$ have treewidth at most~$t$.
By the inductive hypothesis, $G[A]$ and $G[B]$ admit reachability embeddings $(a^A,b^A)$ and $(a^B,b^B)$ of dimension $f(2n/3)$.
We construct embeddings $a,b:V\mapsto \mathbb{R}^{f(2n/3)+2+k}$ by concatenating three blocks of coordinates.

\paragraph{Coordinate construction.}
\begin{enumerate}
    \item 
        \textbf{ Recursion Coordinates.}
        Define $a^{\mathrm{rec}},b^{\mathrm{rec}}:V\mapsto\mathbb{R}^{f(2n/3)}$ by
        $$
            a_u^{\mathrm{rec}}=
            \begin{cases}
            a_u^A & u\in A,\\
            a_u^B & u\in B,\\
            \mathbf{0} & u\in S,
            \end{cases}
            \qquad
            b_u^{\mathrm{rec}}=
            \begin{cases}
            b_u^A & u\in A,\\
            b_u^B & u\in B,\\
            \mathbf{0} & u\in S.
            \end{cases}
        $$

        In particular, for $u,v\in A$ we have $\langle a_u^{\mathrm{rec}},b_v^{\mathrm{rec}}\rangle>0 \iff u\rightsquigarrow_{G[A]} v$,
        and similarly for $u,v\in B$. If $u\in S$ or $v\in S$ then $\langle a_u^{\mathrm{rec}},b_v^{\mathrm{rec}}\rangle=0$.
    
    \item \textbf{ Separation Coordinates.} 
        Let $M_1>0$ satisfy
        $$
                M_1 > \max_{p,q\in V}\ \langle a_p^{\mathrm{rec}}, b_q^{\mathrm{rec}}\rangle.
        $$
        Define $a^{\mathrm{sep}},b^{\mathrm{sep}}:V\mapsto\mathbb{R}^{2}$ by
        $$
                a_u^{\mathrm{sep}}=
                \begin{cases}
                (1,0) & u\in A,\\
                (0,1) & u\in B,\\
                (0,0) & u\in S,
                \end{cases}
                \qquad
                b_u^{\mathrm{sep}}=
                \begin{cases}
                (0,-M_1) & u\in A,\\
                (-M_1,0) & u\in B,\\
                (0,0) & u\in S.
                \end{cases}
        $$
        Then for any $u,v\in V$,
        $$
            \langle a_u^{\mathrm{sep}},b_v^{\mathrm{sep}}\rangle =
            \begin{cases}
            -M_1 & \text{if } (u\in A \text{ and } v\in B)\ \text{or}\ (u\in B \text{ and } v\in A),\\
            0 & \text{otherwise}.
            \end{cases}
        $$

    \item  \textbf{ Connection Coordinates.} 
        Let $M_2>0$ satisfy
        $$
            M_2^2 > M_1 - \min_{p,q\in V}\ \langle a_p^{\mathrm{rec}}, b_q^{\mathrm{rec}}\rangle.
        $$
        Define $a^{\mathrm{conn}},b^{\mathrm{conn}}:V\mapsto \mathbb{R}^{k}$ by, for each $i\in[k]$,
        $$
        a_u^{\mathrm{conn}}[i]=
        \begin{cases}
        M_2 & \text{if } u\rightsquigarrow_G s_i,\\
        0 & \text{otherwise},
        \end{cases}
        \qquad
        b_u^{\mathrm{conn}}[i]=
        \begin{cases}
        M_2 & \text{if } s_i\rightsquigarrow_G u,\\
        0 & \text{otherwise}.
        \end{cases}
        $$
        Thus $\langle a_u^{\mathrm{conn}}, b_v^{\mathrm{conn}}\rangle>0$ iff there exists $s_i\in S$ with
        $u\rightsquigarrow_G s_i$ and $s_i\rightsquigarrow_G v$.
\end{enumerate}

Finally, we concatenate the three coordinates and define the reachability embeddings: for all $u \in V$,
$$
    a_u := (a_u^{\mathrm{rec}}; a_u^{\mathrm{sep}}; a_u^{\mathrm{conn}}),
    \qquad
    b_u := (b_u^{\mathrm{rec}}; b_u^{\mathrm{sep}}; b_u^{\mathrm{conn}}),
$$
so that
$$
    \langle a_u,b_v\rangle
    =
    \langle a_u^{\mathrm{rec}}, b_v^{\mathrm{rec}}\rangle
    +
    \langle a_u^{\mathrm{sep}}, b_v^{\mathrm{sep}}\rangle
    +
    \langle a_u^{\mathrm{conn}}, b_v^{\mathrm{conn}}\rangle.
$$

\paragraph{Correctness.}

We prove that $\langle a_u,b_v\rangle>0$ if and only if $u\rightsquigarrow_G v$.
Call a pair $(u,v)$ \emph{reachable $u \rightsquigarrow_G v$ via $S$} if there exists a separator vertex $s_i\in S$ with
$u\rightsquigarrow_G s_i$ and $s_i \rightsquigarrow_G v$.
Every directed $u\to v$ path either passes through $S$ (yielding reachability via $S$) or avoids $S$ altogether.
In the latter case the path is contained in $G[V\setminus S]$; because there are no edges between $A$ and $B$ in this subgraph,
we must have $u$ and $v$ on the same side, i.e., $u\rightsquigarrow_{G[A]} v$ when $u,v\in A$ or $u\rightsquigarrow_{G[B]} v$ when $u,v\in B$.

\begin{itemize}
    \item \textbf{ Case 1:  $u \rightsquigarrow_G v$ via $S$.} 
        Then $\langle a_u^{\mathrm{conn}}, b_v^{\mathrm{conn}}\rangle \geq M_2^2$.
        Moreover, $\langle a_u^{\mathrm{sep}}, b_v^{\mathrm{sep}}\rangle \geq -M_1$ always, even if $u,v \in A$ (resp. \  $u,v \in B$). Hence, the inner product of their reachability embeddings:
        \begin{equation*}
            \begin{split}
                \left\langle a_u, b_v\right\rangle &=\left\langle a_u^{\text {rec }}, b_v^{\text {rec }}\right\rangle+\left\langle a_u^{\text {sep }}, b_v^{\text {sep }}\right\rangle+\left\langle a_u^{\text {conn }}, b_v^{\text {conn }}\right\rangle\\
                &\geq \min_{p, q \in V}\left\langle a_p^{r e c}, b_q^{r e c}\right\rangle - M_1 +M_2^2  > 0
            \end{split}
        \end{equation*}
        by the choice of $M_2$.

    \item \textbf{ Case 2: $u \rightsquigarrow_G v$ not via $S$.} 
        Then any $u\to v$ path lies entirely in $V\setminus S$. Since there are no edges between $A$ and $B$ in $G[V\setminus S]$,
        we must have either $u,v\in A$ or $u,v\in B$. Assume $u,v\in A$ (the other case is identical). Then
        $u\rightsquigarrow_{G[A]} v$, so $\langle a_u^{\mathrm{rec}}, b_v^{\mathrm{rec}}\rangle>0$ by the inductive embedding on $G[A]$,
        and $\langle a_u^{\mathrm{sep}}, b_v^{\mathrm{sep}}\rangle=0$. Also $\langle a_u^{\mathrm{conn}}, b_v^{\mathrm{conn}}\rangle=0$ by assumption.
        Therefore $\langle a_u,b_v\rangle>0$.

    \item \textbf{ Case 3: $u \not\rightsquigarrow_G v$.}
        Then in particular $u$ cannot reach $v$ via any separator vertex, so $\langle a_u^{\mathrm{conn}}, b_v^{\mathrm{conn}}\rangle=0$.
        We show $\langle a_u^{\mathrm{rec}}, b_v^{\mathrm{rec}}\rangle + \langle a_u^{\mathrm{sep}}, b_v^{\mathrm{sep}}\rangle \le 0$.
        
        \smallskip
            \emph{(i) $u$ and $v$ lie on different sides.}
            If $u\in A, v\in B$ or $u\in B, v\in A$, then $\langle a_u^{\mathrm{sep}}, b_v^{\mathrm{sep}}\rangle=-M_1$, and hence
            \[
            \langle a_u,b_v\rangle
            \le
            \max_{p,q\in V}\langle a_p^{\mathrm{rec}}, b_q^{\mathrm{rec}}\rangle - M_1
            <0
            \]
            by the choice of $M_1$.

        \smallskip
            \emph{(ii) $u$ and $v$ lie on the same side.}
            Assume $u,v\in A$ (the case $u,v\in B$ is identical). Then $\langle a_u^{\mathrm{sep}}, b_v^{\mathrm{sep}}\rangle=0$.
            Since $u\not\rightsquigarrow_G v$, we also have $u\not\rightsquigarrow_{G[A]} v$, and therefore
            $\langle a_u^{\mathrm{rec}}, b_v^{\mathrm{rec}}\rangle\le 0$ by correctness of the recursive embedding on $G[A]$.
            Thus $\langle a_u,b_v\rangle\le 0$.

        \smallskip
            \emph{(iii) at least one vertex lies in $S$.}
            If $u\in S$ then $a_u^{\mathrm{rec}}=a_u^{\mathrm{sep}}=\mathbf{0}$, and if $v\in S$ then $b_v^{\mathrm{rec}}=b_v^{\mathrm{sep}}=\mathbf{0}$.
            Together with $\langle a_u^{\mathrm{conn}}, b_v^{\mathrm{conn}}\rangle=0$, this implies $\langle a_u,b_v\rangle=0$.
\end{itemize}

\paragraph{Dimension bound.}
The construction uses $f(2n/3)$ recursive coordinates, $2$ separation coordinates, and $k\leq t+1$ connection coordinates, hence
$$
    f(n) \le f(2n/3) + 2 + (t+1) = f(2n/3) + O(t).
$$
Unrolling the recurrence yields $f(n)=O(t\log n)$, completing the proof.
\end{proof}

\section{Lower Bound on Embedding Dimension}\label{sec:lb}

This section establishes lower bounds on the dimension required by reachability embeddings. We present lower bounds for general DAGs (Theorem \ref{lb:dag}), for graphs with treewidth $t$ (Theorem~\ref{lb:t/logn}), and for DAGs parameterized by the number of cross-edges (Theorems~\ref{thm:crossedge-lb} and~\ref{lb:cross}).

We start with a general lower bound for arbitrary DAGs, then refine it for graph families with bounded treewidth and for graph families with a prescribed number of cross-edges.
Following~\cite{weller2025theoretical}, our main tool is a reduction from embedding dimension to the \emph{sign rank} of the reachability matrix,
which allows us to invoke known counting bounds for low sign-rank sign matrices.

\begin{definition}[Sign rank]
For $x\in\mathbb{R}$, define $\operatorname{sign}(x)=+1$ if $x>0$ and $\operatorname{sign}(x)=-1$ otherwise.
For a matrix $M\in\{\pm 1\}^{n\times n}$, its sign rank, denoted by $\operatorname{rank}_{\pm}(M)$, is the minimum rank of a real matrix $R\in\mathbb{R}^{n\times n}$
such that $M_{ij}=\operatorname{sign}(R_{ij})$ for all $i,j$.
\end{definition}

Suppose $G=(V,E)$ with $|V|=n$ admits a reachability embedding in dimension $d$,
given by maps $a,b:V\mapsto\mathbb{R}^d$ satisfying the convention
\begin{equation}\label{eq:embed-sign-conv}
    \langle a_u,b_v\rangle>0 \ \Longleftrightarrow\ u\rightsquigarrow_G v,
\end{equation}
and thus $\langle a_u,b_v\rangle\leq 0$ whenever $u\not\rightsquigarrow_G v$.
Stacking the vectors into matrices $A,B\in\mathbb{R}^{n\times d}$ and defining $R:=AB^\top$,
we obtain an $n\times n$ real matrix whose entry $R_{uv}$ equals $\langle a_u,b_v\rangle$.
Let $M\in\{\pm1\}^{n\times n}$ be the induced reachability sign matrix $M_{uv}:=\operatorname{sign}(R_{uv})$.
By~\eqref{eq:embed-sign-conv}, $M_{uv}=+1$ if and only if $u\rightsquigarrow_G v$.
Crucially, the factorization $R=AB^\top$ implies $\operatorname{rank}(R)\leq d$ because both $A$ and $B$ have at most $d$ columns.
Therefore the reachability sign matrix $M$ is realized by a rank-$\leq d$ real matrix, and hence
$$
    \operatorname{rank}_{\pm}(M)\ \leq\ \operatorname{rank}(R)\ \leq\ d.
$$
Any lower bound on $\operatorname{rank}_{\pm}(M)$ immediately yields a lower bound on the embedding dimension $d$. We also rely on the following counting bound on the number of matrices with low sign rank.

\begin{lemma}[Lemma 24 of \citet{pmlr-v49-alon16}, Lemma 1.1 of \cite{signranklowerbound}]\label{lem:signrank}
    Let $d\leq n/2$. The number of sign matrices $M\in\{\pm1\}^{n\times n}$ with $\operatorname{rank}_{\pm}(M)\leq d$ is at most $\left(O\left ( n/d\right) \right)^{2dn}  \leq 2^{O\left(dn\log{\frac{n}{d}}\right)} \leq 2^{O\left(dn\log{n}\right)}$.
\end{lemma}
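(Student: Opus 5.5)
This lemma is a counting bound imported from the literature, so I would reproduce the standard Warren/Alon--Frankl--Rödl argument rather than build anything new.

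\textbf{Step 1: reduce to polynomial sign patterns.} Any matrix with $\operatorname{rank}_{\pm}(M)\le d$ is the sign pattern of some $R=AB^\top$ with $A,B\in\mathbb{R}^{n\times d}$. So $M$ is determined by the signs of the $n^2$ polynomials
\[
P_{uv}(A,B)=\sum_{\ell=1}^d A_{u\ell}B_{v\ell}
\]
in $N=2dn$ real variables. Each $P_{uv}$ has degree $2$. The number of such $M$ is therefore at most the number of sign patterns realized by this family of $m=n^2$ polynomials.

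\textbf{Step 2: handle the convention $\operatorname{sign}(0)=-1$.} The paper sends $0$ to $-1$, so I need to control the patterns in $\{>0,\le 0\}$. I would perturb: for a fixed realizing point, replace each $P_{uv}$ by $P_{uv}+\varepsilon$ with $\varepsilon>0$ smaller than every positive value attained. Every entry is then strictly nonzero and the pattern is preserved. Equivalently, I can count the strict $\{\pm\}$ patterns of the shifted polynomials.

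\textbf{Step 3: invoke Warren's theorem.} Warren's theorem bounds the number of nonzero sign patterns of $m$ polynomials of degree at most $D$ in $N\le m$ variables by $(4eDm/N)^N$. Here $D=2$, $m=n^2$ and $N=2dn$, which gives
\[
\left(\frac{8e\,n^2}{2dn}\right)^{2dn}=\left(O(n/d)\right)^{2dn}.
\]
The hypothesis $d\le n/2$ guarantees $N=2dn\le n^2=m$, which is the regime where Warren's theorem applies. This step is where I expect the main obstacle: I need to get the variable count exactly right so that the bound is $(n/d)^{2dn}$ and not $(n^2/d)^{2dn}$. A standard refinement applies Warren column by column, but the direct count above already yields the stated bound.

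\textbf{Step 4: simplify.} The chain of inequalities is immediate. We have $(O(n/d))^{2dn}=2^{O(dn\log(n/d))}$, and since $\log(n/d)\le\log n$ for $d\ge 1$, this is at most $2^{O(dn\log n)}$.
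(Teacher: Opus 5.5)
Your argument is correct and is essentially the standard Alon--Frankl--R\"odl proof behind the cited lemma; the paper gives no proof of its own, only the citation. That proof applies Warren's bound to the $n^2$ bilinear forms $P_{uv}$ in the $2dn$ entries of $A,B$. The hypothesis $d\le n/2$ gives $N\le m$, and the bound is $(4en/d)^{2dn}$.

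There is one slip in Step 2. The shift must be $P_{uv}-\varepsilon$, not $P_{uv}+\varepsilon$, because adding $\varepsilon$ makes the zero entries positive, while the convention sends them to $-1$. Also, $\varepsilon$ should be fixed once for all realized patterns, so that Warren's theorem is applied to a single polynomial family. This is possible because there are finitely many patterns: pick one witness point per pattern and take $\varepsilon$ below every positive value at those points. Alternatively, cite the version of Warren's bound that counts $\{+,-,0\}$ patterns directly.
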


\subsection{General Lower Bound}

The next theorem follows from a counting argument:
(1) Lemma~\ref{lem:signrank} upper bounds how many reachability sign matrices can be represented in dimension $d$,
while (2) an explicit family of DAGs yields exponentially many distinct reachability sign matrices.

\begin{theorem}[Combinatorial Lower Bound]\label{lb:dag}
    Any reachability embedding scheme for DAGs on $n$ vertices requires dimension $d=\Omega(n)$ in the worst case.
\end{theorem}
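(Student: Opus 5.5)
The proof is a counting argument built on Lemma~\ref{lem:signrank}. We exhibit a family $\mathcal{G}_n$ of DAGs on $n$ vertices whose reachability sign matrices are pairwise distinct and number $2^{\Omega(n^2)}$. If every DAG on $n$ vertices admitted a reachability embedding of dimension $d$, the sign-rank reduction above would give $\operatorname{rank}_{\pm}(M_G)\le d$ for every $G\in\mathcal{G}_n$. Lemma~\ref{lem:signrank} bounds the number of such matrices by $2^{O(dn\log(n/d))}$. Comparing the two counts gives $dn\log(n/d)=\Omega(n^2)$. This by itself yields only $d=\Omega(n/\log n)$, so the key point is to use the sharper form $(O(n/d))^{2dn}$ and solve for $d$ carefully.

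For the family, take bipartite DAGs. Split the vertices into $L=\{\ell_1,\dots,\ell_{n/2}\}$ and $R=\{r_1,\dots,r_{n/2}\}$, and for each subset $E\subseteq L\times R$ direct every edge from $L$ to $R$. Paths have length at most one, so reachability from $\ell_i$ to $r_j$ holds exactly when $(\ell_i,r_j)\in E$. The $L\times R$ block of the reachability matrix therefore recovers $E$, which makes the $2^{n^2/4}$ reachability matrices pairwise distinct. The remaining entries (diagonal, $R\to L$, within-side) are fixed and do not affect distinctness.

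For the arithmetic, suppose $d\le cn$ for a small constant $c$, which is allowed since Lemma~\ref{lem:signrank} needs $d\le n/2$. The count of sign-rank-$\le d$ matrices is at most $2^{2dn\log_2(C n/d)}$. Write $d=\alpha n$; the exponent becomes $2\alpha\log_2(C/\alpha)\,n^2$. Since $\alpha\log(1/\alpha)\to 0$ as $\alpha\to 0$, we can choose $\alpha$ small enough that this exponent is below $n^2/4$. This contradicts the $2^{n^2/4}$ distinct matrices that must all have sign rank $\le d$. Hence $d>\alpha n$, that is, $d=\Omega(n)$.

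The main obstacle is exactly this step: the crude $2^{O(dn\log n)}$ form of the lemma loses a $\log n$ factor, so the tight $(O(n/d))^{2dn}$ bound is essential. We also have to handle the hypothesis $d\le n/2$ of Lemma~\ref{lem:signrank}. If $d>n/2$, the bound $d=\Omega(n)$ already holds, so only the case $d\le n/2$ needs the counting argument.
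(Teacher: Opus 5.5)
Your proposal is correct and follows essentially the same route as the paper: the same family of $2^{n^2/4}$ bipartite $L\to R$ DAGs with pairwise distinct reachability matrices, compared against the $2^{O(dn\log(n/d))}$ count from Lemma~\ref{lem:signrank}. Your explicit substitution $d=\alpha n$, which uses the $\log(n/d)$ form rather than the cruder $\log n$ form to avoid losing a logarithmic factor, simply spells out the ``rearranging and solving for $d$'' step that the paper leaves implicit.
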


\begin{proof}
   Let $d$ be the embedding dimension. Fix $d\leq n/2$ and let $\mathcal{M}_d$ be the set of $n\times n$ sign matrices that arise as the sign pattern
    $\operatorname{sign}(AB^\top)$ of some embedding $a,b:V\mapsto \mathbb{R}^d$ (equivalently, matrices of sign rank at most $d$).
    By Lemma~\ref{lem:signrank}, we have
    \begin{equation}
        \label{eq:upper-count}
        |\mathcal{M}_d|\ \leq \ 2^{O\left(dn\log{\frac{n}{d}}\right)} .
    \end{equation}

    We now lower bound the number of distinct reachability matrices that can occur for DAGs on $n$ vertices by an explicit family.
    Assume for simplicity that $n$ is even and partition the vertex set into two parts
    $L$ and $R$ with $|L|=|R|=n/2$.
    Consider directed bipartite graphs $G$ that only have edges from $L$ to $R$ (and no edges inside $L$ or inside $R$).
    Such a graph is a DAG, and moreover every directed path has length at most $1$. Hence for $u\in L$ and $v\in R$ we have
    $$
        u\rightsquigarrow_G v \quad\Longleftrightarrow\quad (u,v)\in E.
    $$
    Therefore, distinct edge sets $E\subseteq L\times R$ yield distinct reachability matrices. There are $|L\times R|=(n/2)^2=n^2/4$ potential edges from $L$ to $R$, each chosen independently, so this family contains $2^{n^2/4}$ distinct reachability matrices.
    In particular, any class $\mathcal{M}_d$ that represents all reachability matrices of DAGs on $n$ vertices must satisfy
    \begin{equation}\label{eq:lower-count}
        |\mathcal{M}_d|\ \geq\ 2^{n^2/4}.
    \end{equation}

    Combining \eqref{eq:upper-count} and \eqref{eq:lower-count} gives
    $2^{O\left(dn\log{\frac{n}{d}}\right)} \geq 2^{n^2/4}$, and taking logarithms yields
    $O\left(dn\log{\frac{n}{d}}\right)\ge \Omega(n^2)$. Rearranging, and solving for $d$ proves $d=\Omega(n)$.
\end{proof}

The same asymptotic lower bound also follows from classical counting results for partial orders.
In particular, \citet[Theorem~1]{kleitman1970number} show that the number of labeled partial orders on $n$ elements is $2^{n^2/4+o(n^2)}$.
Since reachability in a DAG is a partial order, this yields a corresponding lower bound on the number of reachability sign matrices.
Plugging this stronger counting bound into the sign-rank argument above again gives $d=\Omega(n)$.

\subsection{Treewidth-Parameterized Lower Bound}

We now consider the case where the input graphs have bounded treewidth. We construct a specific family of graphs with treewidth $t$ that generates a large number of distinct reachability matrices. 

\begin{definition}
    Let $n$ and $t$ be integers with $1 \le t < n$. We define $\mathcal{G}_{n,t}$ to be the family of directed graphs on vertex set $V = \{1, \dots, n\}$ such that a graph $G=(V, E)$ is in $\mathcal{G}_{n,t}$ if and only if every edge $(u, v) \in E$ satisfies: (1) $u$ is odd and $v$ is even, (2)  $u < v$, and (3) $v - u \le t$.
\end{definition}

We also state the definition of pathwidth which will be useful in lower bounding the treewidth of graphs in $\mathcal{G}_{n,t}$.

\begin{definition}[Pathwidth]
    The width of a path decomposition of an undirected graph $G$ is one less than the maximum bag size of that path decomposition of $G$. The pathwidth $\operatorname{pw} \left(G\right)$ of $G$ is the minimum width of all path decompositions of $G$.
\end{definition}

\begin{lemma}
    Any graph $G \in \mathcal{G}_{n,t}$ has treewidth at most $t$.
\end{lemma}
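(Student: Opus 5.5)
We will bound the treewidth by exhibiting an explicit path decomposition of width at most $t$; since a path decomposition is a tree decomposition, this gives $\operatorname{tw}(G)\le\operatorname{pw}(G)\le t$. We may work with the underlying undirected graph. Every edge $\{u,v\}$ of any $G\in\mathcal{G}_{n,t}$ satisfies $|u-v|\le t$. So $G$ is a subgraph of the undirected ``$t$-th power of a path'' $P_n^t$ on $\{1,\dots,n\}$, in which $i\sim j$ iff $1\le|i-j|\le t$. By Lemma~\ref{minor} it suffices to show $\operatorname{tw}(P_n^t)\le t$. Alternatively, the decomposition below works directly for $G$.

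\textbf{Construction.} Take the path $T$ with nodes $1,2,\dots,n-t$ (if $n-t\le 0$, i.e.\ never since $t<n$, but if $n-t=1$ there is a single bag). Define the bag
\[
V_i=\{i,i+1,\dots,i+t\},\qquad 1\le i\le n-t.
\]
Each bag has exactly $t+1$ vertices, so the width is $t$.

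\textbf{Verifying the three axioms of Definition~\ref{def:tree_cover}.}
\begin{enumerate}
\item \emph{Coverage.} Vertex $j$ lies in $V_{\min(j,n-t)}$, so the bags cover all of $V$.
\item \emph{Edges.} For an edge $\{u,v\}$ with $u<v\le u+t$, we have $u,v\in V_{\min(u,n-t)}$. Indeed, if $u\le n-t$ the bag $V_u$ contains both. Otherwise the bag $V_{n-t}=\{n-t,\dots,n\}$ contains every vertex $\ge n-t$, hence both $u$ and $v$.
\item \emph{Connectivity.} Vertex $j$ lies in $V_i$ exactly when $\max(1,j-t)\le i\le\min(j,n-t)$. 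This is an interval of consecutive path nodes, hence a connected subpath.
\end{enumerate}

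There is no real obstacle. The only care needed is the boundary indexing, both at the end of the path and in the requirement $t<n$ so that the path $T$ is nonempty. Note that the odd/even and $u<v$ conditions in the definition of $\mathcal{G}_{n,t}$ are not needed for this upper bound; only condition (3), $v-u\le t$, is used.
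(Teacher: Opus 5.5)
Your proof is correct and uses the same path decomposition as the paper, with bags $\{i,\dots,i+t\}$ for $1\le i\le n-t$, and checks the same axioms. Your boundary handling is slightly more careful than the paper's. You put an edge with tail $u$ in the bag indexed $\min(u,n-t)$ and take the index range $[\max(1,j-t),\min(j,n-t)]$, whereas the paper puts every edge $(u,v)$ in $X_u$ and states the range as $[\max(1,v-t),v]$; both of the paper's choices run past the last bag index $n-t$ when $u>n-t$ (respectively $v>n-t$).
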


\begin{proof}
    We show that $G$ has pathwidth at most $t$. Consider the sequence of subsets $X_1, \dots, X_{n-t}$ where $X_i = \{i, i+1, \dots, i+t\}$. For any edge $(u, v) \in E$, we have $u < v \le u+t$, so both endpoints are contained in the set $X_u$. Furthermore, for any vertex $v$, the indices $i$ such that $v \in X_i$ form the interval $[\max(1, v-t), v]$, which is contiguous. Since each set has size $t+1$, the pathwidth (and thus treewidth) is at most $(t+1)-1 = t$.
\end{proof}

\begin{theorem}\label{lb:t/logn}
    For the embedding problem on DAGs with treewidth $t$, the required dimension is $d = \Omega\left(t / \log{\frac{n}{t}} \right)$.
\end{theorem}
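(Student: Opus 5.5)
The plan is to repeat the counting argument of Theorem~\ref{lb:dag}, but for the family $\mathcal{G}_{n,t}$ defined above. By the preceding lemma, every graph in this family has treewidth at most $t$. Three steps are needed: show that every graph in $\mathcal{G}_{n,t}$ is a DAG, show that distinct graphs in the family give distinct reachability matrices, and count the family. Comparing that count with Lemma~\ref{lem:signrank} then yields the bound.

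\textbf{Step 1: DAGs with trivial reachability.} Every edge goes from an odd vertex to an even vertex with $u<v$. So each graph $G\in\mathcal{G}_{n,t}$ is acyclic. Moreover, no even vertex has an outgoing edge, so every directed path has length at most one. Hence, for odd $u$ and even $v$, we have $u\rightsquigarrow_G v$ if and only if $(u,v)\in E$. It follows that the map from edge sets to reachability sign matrices is injective on this family.

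\textbf{Step 2: counting the family.} The admissible pairs are $(u,v)$ with $u$ odd, $v$ even, and $0<v-u\le t$. Each odd $u$ at distance at least $t$ from $n$ has roughly $t/2$ admissible partners, namely $v=u+1,u+3,\dots$ up to $u+t$. There are about $n/2$ such odd vertices. Summing gives $N=\Omega(nt)$ admissible pairs, and the constant is harmless as long as, say, $t\le n/2$. For $t$ close to $n$ the statement is no stronger than Theorem~\ref{lb:dag}, so I would reduce to the case $t\le n/2$, or just note $N=\Omega(nt)$ for all $1\le t<n$. Each admissible pair can be included or excluded independently, so the family produces at least $2^{\Omega(nt)}$ distinct reachability matrices.

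\textbf{Step 3: comparison.} Suppose every graph in the family admits a reachability embedding of dimension $d$. If $d>n/2$ we are already done, since $t/\log(n/t)=O(n)$. Otherwise Lemma~\ref{lem:signrank} applies, and since each reachability matrix has sign rank at most $d$, we get
\[
2^{\Omega(nt)} \le 2^{O(dn\log(n/d))}, \qquad \text{that is,} \qquad d\log(n/d)=\Omega(t).
\]

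\textbf{Main obstacle: solving for $d$.} The remaining work is to convert $d\log(n/d)\ge ct$ into $d=\Omega(t/\log(n/t))$. I would argue by contradiction. Suppose $d\le \epsilon t/\log(n/t)$ for a small constant $\epsilon>0$. Then $n/d\ge (n/t)\log(n/t)/\epsilon$, so
\[
\log(n/d)\le \log(n/t)+\log\log(n/t)+\log(1/\epsilon)=O(\log(n/t)).
\]
This holds when $n/t\ge 2$; when $n/t$ is close to $1$, I would use $\log(n/t)\ge$ some constant after the reduction $t\le n/2$. Consequently $d\log(n/d)\le O(\epsilon)\,t<ct$ once $\epsilon$ is chosen small enough, which is a contradiction. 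This gives $d=\Omega\!\left(t/\log\frac{n}{t}\right)$.
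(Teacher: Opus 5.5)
Your plan follows the paper's proof: the family $\mathcal{G}_{n,t}$, reachability equal to adjacency because every path has length at most one, $2^{\Omega(nt)}$ distinct sign matrices, and comparison with Lemma~\ref{lem:signrank}. You are also more careful than the paper, which simply asserts the final implication. However, the step you flagged as the main obstacle is wrong as written. From $d\le \epsilon t/\log(n/t)$ you correctly get $n/d\ge (n/t)\log(n/t)/\epsilon$. But this gives the \emph{lower} bound $\log(n/d)\ge \log(n/t)+\log\log(n/t)+\log(1/\epsilon)$, not an upper bound. For instance, with $t=n/2$ and $d=1$ one has $\log(n/d)=\log n$, while your right-hand side is $O(1)$. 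So your derivation of $d\log(n/d)\le O(\epsilon)\,t$ does not go through, even though that inequality is in fact true.

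The missing ingredient is monotonicity: $x\mapsto x\log(n/x)$ is increasing on $(0,n/e]$. Put $d_0:=\epsilon t/\log(n/t)$. Since $t\le n/2$, we have $d_0\le n/e$ for small $\epsilon$, so $d\le d_0$ implies $d\log(n/d)\le d_0\log(n/d_0)$. At $x=d_0$ your identity holds with equality (natural logarithms):
\[
d_0\log\frac{n}{d_0}=\epsilon t\left(1+\frac{\log\log(n/t)+\log(1/\epsilon)}{\log(n/t)}\right)\le \epsilon t\left(1+\frac{1}{e}+\frac{\log(1/\epsilon)}{\log 2}\right),
\]
using $\log(n/t)\ge \log 2$ and $\log y/y\le 1/e$. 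This is $O(\epsilon\log(1/\epsilon))\,t<ct$ for $\epsilon$ small enough, which gives the contradiction.

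One further remark: the restriction $t\le n/2$ is needed for the statement itself, not just for convenience. Literally, $t/\log(n/t)$ is of order $n^2$ at $t=n-1$, whereas dimension $n$ always suffices. So the theorem must be read with $\log(n/t)$ replaced by $\max\{1,\log(n/t)\}$. Under that reading, your appeal to Theorem~\ref{lb:dag} for $t>n/2$ is correct, since its bipartite family has treewidth at most $n/2$.
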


\begin{proof}
    Let $d$ be the embedding dimension. Consider the graphs in $\mathcal{G}_{n,t}$. Since edges only go from odd vertices to even vertices, the graph is bipartite and maximum path length is 1. Therefore, for all distinct edge sets, the corresponding reachability relations are distinct.
    
    We now count the number of possible graphs in $\mathcal{G}_{n,t}$. For each odd vertex $u$, there are approximately $t/2$ valid even vertices $v$ such that $u < v \le u+t$. Since there are $n/2$ odd vertices, the total number of possible edges is roughly $(n/2) \cdot (t/2) = nt/4$.  Each potential edge can be chosen independently, so the number of distinct graphs in $\mathcal{G}_{n,t}$ is at least $2^{nt/4}$. Using Lemma \ref{lem:signrank}, the number of sign matrices of rank $d$ is at most $2^{O\left(dn\log{\frac{n}{d}}\right)}$. Rearranging, and solving for $d$ we get,
    \begin{equation*}
        2^{O\left(dn\log{\frac{n}{d}}\right)} \ge 2^{nt/4} \implies O\left(dn \log{\frac{n}{d}}\right) \ge \Omega(nt) \implies d \ge \Omega\left(\frac{t}{\log{\frac{n}{t}}}\right). 
    \end{equation*} \end{proof}

\subsection{Cross-Edge--Parameterized Lower Bounds}\label{sec:additional_lb}
In this subsection we give two distinct lower bounds for how the required embedding dimension must scale with the number of cross-edges. Recall the definition of cross-edges.
Let $F = (V, E_F)$ be a directed spanning-forest subgraph of $G$, then the cross-edges w.r.t. $F$ are all the edges $(u,v)$ where $u$ cannot reach $v$ in $F$.

The first lower bound, based on counting the number of matrices of a particular sign rank via Lemma \ref{lem:signrank}, is given below. The first lower bound is useful in the regime when number of cross-edges is large $(\gg n)$, and the second one is better suited for small values of the number of cross-edges.
\begin{theorem}
\label{thm:crossedge-lb}
    Let $n$ be an even number.
    For every $k\in\{0,1,\dots, \frac{n^2 -2n}{4}\}$, there exists a family $\mathcal{F}_{n,k}$ of DAGs on $n$ vertices such that:
    (i) each $G\in\mathcal{F}_{n,k}$ satisfies $k(G)=k$, and
    (ii) $\mathcal{F}_{n,k}$ contains at least $\binom{\,\Theta(n^2)\,}{k}$
 distinct reachability sign matrices.
    Consequently, any reachability embedding scheme that succeeds on all graphs in $\mathcal{F}_{n,k}$ must use dimension
    $$
    d \;=\; \Omega\left(\frac{k\log(n^2/k)}{n\log n}\right).
    $$
\end{theorem}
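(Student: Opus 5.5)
The plan is to build a DAG family in which a fixed spanning forest accounts for $n/2$ edges and any $k$ of the remaining $\Theta(n^2)$ potential edges can be chosen as cross-edges, each choice giving a distinct reachability matrix. Then we compare with the counting bound of Lemma~\ref{lem:signrank}.

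\emph{Construction.} Split $V$ into $L=\{\ell_1,\dots,\ell_{n/2}\}$ and $R=\{r_1,\dots,r_{n/2}\}$. Fix the perfect matching $E_F=\{(\ell_i,r_i)\}$; this is a directed spanning forest of stars of depth one. Let $P=\{(\ell_i,r_j): i\neq j\}$ be the pool of potential extra edges, so $|P|=\frac{n}{2}(\frac{n}{2}-1)=\frac{n^2-2n}{4}$. This count is exactly the stated range of $k$.

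For each $k$-subset $C\subseteq P$, let $G_C=(V,E_F\cup C)$. Every edge goes from $L$ to $R$, so $G_C$ is a DAG, all paths have length at most one, and $u\rightsquigarrow v$ holds iff $u=v$ or $(u,v)$ is an edge. With respect to $F$, an edge $(\ell_i,r_j)$ with $i\ne j$ is not realized in $F$, so it is a cross-edge, while the forest edges are not. Hence $k(G_C)=k$ for the forest $F$. Since $k(G)$ is presumably measured against a chosen spanning forest, I would state the family with $F$ fixed. If the parameter is instead a minimum over forests, a short extra argument is needed: any spanning forest uses at most $n/2$ edges of $G_C$, and the cross-edges are exactly the edges outside it, so the count is $|E|-|E_F|$ whenever $F$ is a maximal matching-type forest. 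This keeps $k$ within a constant factor, which suffices for the asymptotics.

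\emph{Distinctness and counting.} Distinct $C$ give distinct edge sets, and for $L\to R$ pairs reachability coincides with edge membership, so the sign matrices are distinct. This gives $|\mathcal{F}_{n,k}|=\binom{(n^2-2n)/4}{k}=\binom{\Theta(n^2)}{k}$. If all of these have sign rank at most $d$, then Lemma~\ref{lem:signrank} gives
\[
2^{O(dn\log n)}\ \ge\ \binom{(n^2-2n)/4}{k}\ \ge\ \Big(\tfrac{n^2-2n}{4k}\Big)^k = 2^{\Omega(k\log(n^2/k))}.
\]
Here $d\le n/2$ may be assumed, since otherwise the bound is trivial because $k\log(n^2/k)/(n\log n)=O(n)$. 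Rearranging yields $d=\Omega\big(k\log(n^2/k)/(n\log n)\big)$.

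The main obstacle is making the bound $\binom{N}{k}\ge (N/k)^k$ give $\log(n^2/k)$ uniformly when $k$ is close to $N=\Theta(n^2)$. In that regime $\log(N/k)$ can be much smaller than $\log(n^2/k)$, and the two can even differ by more than a constant factor as $k\to N$. I would first try handling large $k$ separately by taking the lower bound $\binom{N}{k}\ge \binom{N}{N/2}$-type estimates for $k\ge N/2$. This gives $2^{\Omega(n^2)}$, and since $k\log(n^2/k)=O(n^2)$ there, the same conclusion follows. For $k\le N/2$ we have $N/k\ge \Omega(n^2/k)$ up to constants inside the logarithm, which is fine once $k\le cn^2$ for small $c$. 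For the intermediate constant-fraction range, both sides are $\Theta(n^2)$ anyway.
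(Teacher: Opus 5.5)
Your construction (the matching forest $E_F=\{(\ell_i,r_i)\}$, the pool of $N=\frac{n^2-2n}{4}$ non-matching pairs, one DAG per $k$-subset), the cross-edge count with respect to the fixed $F$, the distinctness argument, and the appeal to Lemma~\ref{lem:signrank} are exactly the paper's proof. Your aside about minimizing over forests is settled exactly, not just up to a constant factor. Every directed spanning forest of $G_C$ has at most $n/2$ edges, because vertices of $L$ have in-degree $0$, and reachability in such a forest is just adjacency. So every forest leaves at least $k$ cross-edges.

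The genuine problem is your repair for large $k$. The inequality $\binom{N}{k}\ge\binom{N}{N/2}$ for $k\ge N/2$ is backwards. The binomial $\binom{N}{k}$ is maximized at $k=N/2$ and decreases to $\binom{N}{N}=1$, so near $k=N$ the family is tiny and counting gives nothing.

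No argument can rescue this regime, because the stated bound is false there. For $k=N$ the family is the single complete bipartite DAG $L\to R$. Its reachability matrix has sign rank $O(1)$ under either convention for the diagonal. With the paper's convention $u\not\rightsquigarrow u$ it is realized in dimension $1$ by $a_u=1,\ b_u=-1$ for $u\in L$ and $a_u=0,\ b_u=1$ for $u\in R$. Yet $\frac{N\log(n^2/N)}{n\log n}=\Theta(n/\log n)$, since $n^2/N\to 4$.

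More generally, take $k=N-j$. Each matrix in the family differs from that one in $j$ entries. Flipping a single entry raises sign rank by at most one (add a rank-one correction supported on that entry). So every member has sign rank at most $j+1$, which contradicts the claimed bound whenever $j=o(n/\log n)$.

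The correct fix is to restrict the range to $k\le(1-c)N$ for a fixed $c>0$:
\begin{itemize}
\item For $k\le N/2$ your estimate works. Here $\log(N/k)\ge 1$, and $\log(n^2/N)\le 3$ for $n\ge 4$, so $\log(n^2/k)\le 4\log(N/k)$.
\item For $N/2\le k\le(1-c)N$, use $\binom{N}{k}=\binom{N}{N-k}\ge 2^{N-k}\ge 2^{cN}$, together with $k\log(n^2/k)\le n^2/2$.
\end{itemize}
The paper's own proof has the same blind spot: it passes from $(\Theta(n^2)/k)^k$ to $\log(n^2/k)$ with no restriction on $k$. You were right to be suspicious of that step, but the remedy is to restrict the statement, not to find a sharper binomial estimate.
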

\begin{proof}
    Let $m:=n/2$ and partition the vertex set into
    $L=\{\ell_1,\dots,\ell_m\}$ and $R=\{r_1,\dots,r_m\}$.
    Define the directed edge set
    $$
        E_F := \{(\ell_i,r_i): i\in[m]\},
    $$
    and let $U := (L\times R)\setminus E_F$, so $|U|= \frac{n^2 -2n}{4}$.
    For each subset $S\subseteq U$ with $|S|=k$, define the bipartite DAG
    $$
        G_S := (L\cup R,\ E_S),
        \qquad
        E_S := E_F\cup S.
    $$
    Since all edges go from $L$ to $R$, every vertex in $R$ has out-degree $0$.
    Hence every directed path has length at most $1$, so reachability coincides with adjacency:
    for $u\in L$ and $v\in R$,
    $$
        u\rightsquigarrow_{G_S} v \iff (u,v)\in E_S.
    $$ 
    Therefore, distinct choices of $S$ yield distinct reachability relations, and hence distinct reachability sign matrices.

    Since $F = (V,E_F)$ is a directed spanning-forest subgraph of $G_S$, 
    every edge in $E_S\setminus E_F$ is a cross-edge:
    $$
        E_{\mathrm{cross}}(G_S)=E_S\setminus E_F = S,
        \qquad\text{and thus}\qquad
        k(G_S)=|S|=k.
    $$
    It follows that $\mathcal{F}_{n,k}:=\{G_S: S\subseteq U,\ |S|=k\}$ has size $\binom{\Theta \left(n^2\right)}{k}$, every graph in the family has $k$ cross-edges and induces distinct reachability sign matrix.
    Now suppose there is a reachability embedding of dimension $d$ for every graph in $\mathcal{F}_{n,k}$.
    Then all $\binom{\Theta \left(n^2\right)}{k}$ corresponding reachability sign matrices have sign rank at most $d$.
    By Lemma~\ref{lem:signrank}, the number of $n\times n$ sign matrices of sign rank at most $d$ is at most $2^{O(dn\log \frac{n}d)}$.
    Therefore,
    $$
    \binom{\Theta \left(n^2\right)}{k} \;\leq\; 2^{O\left(dn\log{\frac{n}{d}}\right)},
    $$
    and rearranging and using the fact that $ \binom{\Theta \left(n^2\right)}{k} \ge \left(\frac{\Theta(n^2)}{k}\right)^k$ gives $$  d=\Omega\!\left(\frac{k\log(n^2/k)}{n\log n }\right). $$
\end{proof}



For our second lower bound, we deploy a different strategy. Instead of employing a counting argument to count the number of matrices of a particular sign rank (via Lemma \ref{lem:signrank}), we directly construct a family of graphs with large sign rank. In particular, we construct a DAG whose reachability sign matrix $M$ includes the Sylvester–Hadamard matrix $H_{t}$ as its submatrix. The Sylvester–Hadamard matrix $H_{t}$ has spectral norm exactly $\sqrt{t}$, which allows us to lower-bound the sign-matrix of $H_{t}$ by utilizing Forster's Theorem (Theorem \ref{l:forster}). Note that the sign-rank of a matrix is at least the sign-rank of any of its submatrices. Thus, we show a lower bound of the sign-rank of $M$, which immediately yields a lower bound on the embedding dimension $d$ (see the discussion in Section \ref{sec:lb}). Our second lower bound result of the section is the following.
\begin{theorem}\label{lb:cross}
For every sufficiently large $k$, there exists a DAG with $k$ cross-edges such that any reachability embedding for this graph requires dimension $d  = \Omega(k^{1/4})$.
\end{theorem}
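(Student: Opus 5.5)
We embed a Sylvester--Hadamard matrix $H_t$ (with $t$ a power of $2$) as a submatrix of the reachability sign matrix of a DAG that has few cross-edges. Then we apply Forster's theorem (Theorem~\ref{l:forster}). For an $N\times N$ sign matrix $X$, it gives $\operatorname{rank}_{\pm}(X)\ge N/\|X\|$. For $H_t$ this yields $\operatorname{rank}_{\pm}(H_t)\ge t/\sqrt{t}=\sqrt{t}$. Sign rank is monotone under taking submatrices, and by the discussion at the start of Section~\ref{sec:lb} the embedding dimension is at least the sign rank. So $d\ge \sqrt{t}$.

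The target is a DAG with $k=O(t^2)$ cross-edges, which gives $d=\Omega(\sqrt t)=\Omega(k^{1/4})$.

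\textbf{Construction.} Take $t$ query vertices $\ell_1,\dots,\ell_t$ and $t$ document vertices $r_1,\dots,r_t$. Add an edge $(\ell_i,r_j)$ exactly when $(H_t)_{ij}=+1$. Every edge goes from $L$ to $R$, so reachability from $L$ to $R$ coincides with adjacency. Hence the $L\times R$ submatrix of the reachability sign matrix is exactly $H_t$, with $+1$ for reachable pairs and $-1$ otherwise, matching the sign convention.

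\textbf{Counting cross-edges.} Choose a spanning forest $F$ with one tree edge into each $r_j$, for instance a matching-like choice $\ell_{\pi(j)}\to r_j$ for some $\ell$ with $(H_t)_{\pi(j) j}=+1$. Every other edge is a cross-edge. There are at most $t^2$ edges, so $k\le t^2$.

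\textbf{Achieving exactly $k$.} The statement asks for exactly $k$ cross-edges for every sufficiently large $k$. Let $t$ be the largest power of two with $t^2\lesssim k$, so $t=\Theta(\sqrt k)$. If the construction has fewer than $k$ cross-edges, pad with a disjoint gadget that adds cross-edges without lowering the sign rank. One option is a separate bipartite component: extra vertices $x_1,\dots,x_p$ each pointing to two sinks, one edge being a tree edge and the other a cross-edge. Adding disjoint components keeps $H_t$ as a submatrix, so the bound $d\ge\sqrt t=\Omega(k^{1/4})$ survives.

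\textbf{Main obstacle.} The key fact is the Forster bound for $H_t$, specifically that $\|H_t\|=\sqrt t$, which holds since $H_tH_t^\top=tI$. Beyond that, the remaining care is bookkeeping: checking the sign convention (nonpositive entries map to $-1$), and verifying that the padding hits exactly $k$ cross-edges while the graph stays acyclic.
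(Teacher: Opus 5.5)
Your proposal is correct and is essentially the paper's proof via Lemma~\ref{l:lb3}. The ingredients match: the bipartite DAG whose edges follow the $+1$ pattern of $H_t$; a forest giving each right-hand vertex a single parent (the paper hangs them all off the all-$+1$ first row); $(t-1)t/2<k$ cross-edges with $t=\Theta(\sqrt{k})$; and Forster's theorem plus submatrix monotonicity, giving $d\ge\sqrt{t}=\Omega(k^{1/4})$. Your padding to exactly $k$ cross-edges is an extra step the paper skips, since its lemma only guarantees fewer than $k$. The padding is fine because the paper does not require $F$ to be maximal; using two sources that point into a common sink would additionally make each padded cross-edge unavoidable for every choice of $F$.
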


For the sake of completeness, we state Forster's Theorem (shown as Theorem~\ref{l:forster}) and the properties of Sylvester–Hadamard matrix.
\begin{theorem}[\cite{forster2002linear}]\label{l:forster}
    For a matrix $M\in \{-1,1\}^{n\times n}$, $rank_{\pm}M\geq n/\|M\|$, where $\|M\|$ represents the spectral norm of $M$.
\end{theorem}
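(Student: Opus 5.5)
We would prove Theorem~\ref{l:forster} by the standard double-counting argument of Forster, based on putting one side of a low-rank realization into isotropic position. Let $d=\operatorname{rank}_{\pm}(M)$ and fix vectors $x_1,\dots,x_n,y_1,\dots,y_n\in\mathbb{R}^d$ with $M_{ij}=\operatorname{sign}(\langle x_i,y_j\rangle)$. Two harmless normalizations come first. Since sign is defined by $>0$, we can perturb the $y_j$ slightly so that no inner product is $0$ while every sign is preserved. We can also perturb the $x_i$ into general position, meaning every $d$ of them are linearly independent, again without changing any sign, since all inner products are now bounded away from $0$.

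The main tool is Forster's isotropic-position lemma. For vectors $x_1,\dots,x_n\in\mathbb{R}^d$ in general position with $n\ge d$, there is an invertible linear map $L$ such that the unit vectors $\tilde x_i:=Lx_i/\|Lx_i\|$ satisfy $\sum_i \tilde x_i\tilde x_i^\top=\frac{n}{d}I_d$. Replacing $x_i$ by $\tilde x_i$ and $y_j$ by $\tilde y_j:=L^{-\top}y_j/\|L^{-\top}y_j\|$ preserves every sign, because $\langle Lx_i,L^{-\top}y_j\rangle=\langle x_i,y_j\rangle$ and we only rescale by positive numbers. This lemma is the step we expect to be the main obstacle. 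We would prove it variationally: maximize $\sum_i\log|\langle x_i,\cdot\rangle|$-type potentials, or equivalently $\sum_i \log\|Lx_i\|$ subject to $\det L=1$. General position makes this objective coercive, so a maximizer exists. The first-order optimality condition is then exactly the isotropy identity.

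With the isotropic realization, we bound $\Sigma:=\sum_{i,j}|\langle \tilde x_i,\tilde y_j\rangle|$ from both sides.

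\emph{Lower bound.} For each $j$, since $|\langle \tilde x_i,\tilde y_j\rangle|\le 1$ for unit vectors,
\[
\sum_i|\langle \tilde x_i,\tilde y_j\rangle|\ \ge\ \sum_i\langle \tilde x_i,\tilde y_j\rangle^2\ =\ \tilde y_j^\top\Big(\tfrac{n}{d}I\Big)\tilde y_j\ =\ \tfrac{n}{d}.
\]
Summing over $j$ gives $\Sigma\ge n^2/d$.

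\emph{Upper bound.} Because the signs agree with $M$, we have $\Sigma=\sum_{i,j}M_{ij}\langle\tilde x_i,\tilde y_j\rangle=\sum_{k=1}^d X_k^\top M Y_k$. Here $X_k$ and $Y_k$ are the $k$-th coordinate columns of the matrices with rows $\tilde x_i$ and $\tilde y_j$. Each term is at most $\|M\|\,\|X_k\|\,\|Y_k\|$. By Cauchy--Schwarz,
\[
\Sigma\ \le\ \|M\|\Big(\sum_k\|X_k\|^2\Big)^{1/2}\Big(\sum_k\|Y_k\|^2\Big)^{1/2}\ =\ \|M\|\sqrt{n\cdot n}\ =\ n\|M\|.
\]

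Combining the two bounds gives $n^2/d\le n\|M\|$, that is, $d\ge n/\|M\|$. The case $d>n$ is vacuous, since $\|M\|\ge\sqrt n$ for any $\pm1$ matrix. So we may assume $n\ge d$, which is where the isotropy lemma applies.
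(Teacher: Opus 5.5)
There is a genuine gap in your first normalization step, and it cannot be closed, because the inequality is false for the sign-rank convention this paper uses. The rest of your outline is Forster's original argument; the paper gives no proof and simply cites \cite{forster2002linear}. The general-position perturbation, the transfer to isotropic position via $L$ and $L^{-\top}$, and both bounds on $\Sigma$ are correct once every $\langle x_i,y_j\rangle$ is nonzero.

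\textbf{The zero-removal step fails.} Under the paper's convention $\operatorname{sign}(0)=-1$, a realization may have $\langle x_i,y_j\rangle=0$ wherever $M_{ij}=-1$. Making all of these strictly negative by moving $y_j$ needs a direction $\delta$ with $\langle x_i,\delta\rangle<0$ for every $i$ in that zero set. No such $\delta$ exists if some $x_i=0$, or more generally if $0$ lies in the convex hull of those $x_i$.

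\textbf{A counterexample.} Take $M=\begin{pmatrix}1&-1\\-1&-1\end{pmatrix}$. It is the sign pattern of the rank-one matrix $e_1e_1^{\top}$, so $\operatorname{rank}_{\pm}M=1$. Yet $M^{\top}M=2I$ gives $n/\|M\|=\sqrt{2}>1$. The same failure occurs for every $n\ge 2$ with $M=-J+2e_1e_1^{\top}$, where $J$ is the all-ones matrix: this $M$ has rank two, so $\|M\|<\|M\|_F=n$.

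\textbf{What you have actually proved, and the repair.} Your argument proves Forster's theorem for strict sign rank, where the realizing matrix has no zero entries; there your first step is vacuous. To get a valid statement under the paper's definition, replace $R$ by $R-\epsilon J$ with $0<\epsilon<\min\{R_{ij}:R_{ij}>0\}$. (If $R$ has no positive entry, then $M=-J$ and the bound below is trivial.) This removes all zeros at the cost of one extra dimension, so the strict theorem yields $\operatorname{rank}_{\pm}M\ge n/\|M\|-1$. That weaker bound still gives $\operatorname{rank}_{\pm}H_t\ge\sqrt{t}-1=\Omega(k^{1/4})$, which is the only way the paper uses the theorem.

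\textbf{A smaller slip in the isotropy lemma.} On $\{\det L=1\}$ the function $\sum_i\log\|Lx_i\|$ is unbounded above. Take $L=\operatorname{diag}(e^{(d-1)s},e^{-s},\dots,e^{-s})$. General position puts at most $d-1$ of the $x_i$ in $\{z_1=0\}$, so the sum grows at least like $(d-1)(n-d)s$. For $n>d$, general position makes the function tend to $+\infty$ as $L$ degenerates, so you must minimize it, not maximize it. At a minimizer, stationarity under $L\mapsto(I+\epsilon E)L$ with $\operatorname{tr}E=0$ gives $\sum_i\tilde x_i\tilde x_i^{\top}=cI$, and taking traces gives $c=n/d$. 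For $n=d$, simply map the $x_i$ to an orthonormal basis.
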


\begin{lemma}[\cite{horadam2012hadamard}]\label{l:hadamard}
    For the Sylvester–Hadamard matrix $H_{t}$ of order $t$, where $t=2^k$ for some positive integer $k$, we have the following properties:
    \begin{enumerate}
        \item The first row and first column of $H_{t}$ consist entirely of $+1$ entries.
        \item Each row (except the first) and each column (except the first) has exactly $t/2$ entries equal to $+1$ and $t/2$ entries equal to $-1$.
        \item The spectral norm of $H_{t}$ is $\|H_{t}\|=\sqrt{t}$.
    \end{enumerate}
\end{lemma}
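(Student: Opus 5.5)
We prove the three properties of $H_t$ directly from the Sylvester recursion $H_1=(1)$ and
\[
H_{2m}=\begin{pmatrix} H_m & H_m\\ H_m & -H_m\end{pmatrix},
\]
by induction on $k$, where $t=2^k$. Our main tool is the closed form $(H_t)_{ij}=(-1)^{\langle i,j\rangle}$. Here rows and columns are indexed by $i,j\in\{0,\dots,t-1\}$, identified with their binary expansions in $\mathbb{F}_2^k$, and $\langle i,j\rangle$ is the $\mathbb{F}_2$ inner product of the bit vectors.

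\textbf{Step 1: the closed form.} The formula holds trivially for $H_1$. Assume it holds for $H_m$ with $m=2^{k-1}$. Index $H_{2m}$ by prepending a top bit to each index. The block $(\text{row top bit},\text{col top bit})$ carries the sign $-1$ exactly when both top bits equal $1$. Hence the recursion multiplies each entry by $(-1)^{i_{\text{top}}j_{\text{top}}}$, and the inner product picks up exactly the extra term $i_{\text{top}}j_{\text{top}}$.

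\textbf{Step 2: properties 1 and 2.} Row $0$ and column $0$ consist of $(-1)^0=+1$, which gives property 1. Now fix a row $i\neq 0$. The map $j\mapsto\langle i,j\rangle$ is a nonzero linear functional on $\mathbb{F}_2^k$, so its kernel is a hyperplane of size $2^{k-1}=t/2$. Therefore exactly $t/2$ entries of the row are $+1$ and $t/2$ are $-1$. The same argument applies to columns, since $H_t$ is symmetric. This gives property 2.

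\textbf{Step 3: the spectral norm.} We show $H_tH_t^\top=tI$. The diagonal entries are $\sum_j 1=t$. For $i\ne i'$, the $(i,i')$ entry is $\sum_j(-1)^{\langle i\oplus i',j\rangle}$, where $i\oplus i'$ denotes the bitwise XOR, i.e.\ addition in $\mathbb{F}_2^k$. Since $i\oplus i'\ne 0$, this sum vanishes by the balance argument of Step 2. Alternatively, one can prove orthogonality by induction directly from the block form: $H_{2m}H_{2m}^\top$ has diagonal blocks $2H_mH_m^\top=2mI$ and off-diagonal blocks $H_mH_m^\top-H_mH_m^\top=0$. Consequently $H_t/\sqrt t$ is orthogonal, all singular values of $H_t$ equal $\sqrt t$, and $\|H_t\|=\sqrt t$.

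The only step requiring care is Step 1, specifically keeping the indexing convention consistent when bits are prepended. Steps 2 and 3 are then one-line character-sum arguments. If we want to avoid the closed form entirely, each property can also be proved by induction from the block recursion. For property 2, the rows of $H_{2m}$ with index $\ge m$, apart from row $m$, concatenate a balanced row $r$ of $H_m$ with $-r$, so they stay balanced. Row $m$ itself is $(\mathbf 1,-\mathbf 1)$, which is also balanced.
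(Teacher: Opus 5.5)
Your proof is correct. Note that the paper does not prove this lemma: it quotes it from the cited reference on Hadamard matrices and uses it as a black box in Lemma~\ref{l:lb3}. So your argument is not a competing route so much as the self-contained justification the paper omits.

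The closed form $(H_t)_{ij}=(-1)^{\langle i,j\rangle}$ gives all three properties at once, and your induction establishing it handles the top-bit indexing correctly. Property 1 is the case of the zero functional. Property 2 holds because the kernel of a nonzero linear functional on $\mathbb{F}_2^k$ has size $2^{k-1}=t/2$. Property 3 follows from the orthogonality relation $H_tH_t^\top=tI$. That relation is stronger than what is stated: every singular value equals $\sqrt t$, and in fact $H_t$ is symmetric with $H_t^2=tI$.

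Relying on the citation keeps the paper short. Your version buys self-containment. It also makes clear that the spectral-norm input to Theorem~\ref{l:forster}, namely $\operatorname{rank}_{\pm}H_t\ge t/\|H_t\|=\sqrt t$, is nothing more than row orthogonality. Properties 1 and 2 are only needed for counting cross-edges in the construction.

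One small point concerns your fallback inductive argument for property 2. It discusses only the rows of index $\ge m$, which have the form $(r,-r)$. Such rows are balanced for \emph{any} $\pm1$ vector $r$, so no inductive hypothesis is needed there. The rows of index $1,\dots,m-1$ have the form $(r,r)$, with $r$ a non-first row of $H_m$. That is where the induction is actually used, and you should mention them. This does not affect your main proof, which goes through the closed form and is complete.
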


Furthermore, we need the property that the sign-rank of a matrix is at least the sign-rank of any of its submatrices, the correctness of which can be derived from the definition of sign-rank.
\begin{lemma}\label{l:sign_sub}
    Given a matrix $M\in \{-1,1\}^{n\times n}$. Let $M'$ be a submatrix of $M$. Then $rank_{\pm}M\geq rank_{\pm}M'$.
\end{lemma}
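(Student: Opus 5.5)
The claim is Lemma~\ref{l:sign_sub}: if $M'$ is a submatrix of $M$, then $\operatorname{rank}_{\pm}M \ge \operatorname{rank}_{\pm}M'$. The plan is to argue directly from the definition, by restricting a witness for $M$ to the rows and columns that define $M'$.

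\textbf{Step 1 (fix a witness).} Let $M'$ be indexed by a row set $I$ and a column set $J$, so that $M'_{ij} = M_{ij}$ for $i \in I$, $j \in J$. Let $r = \operatorname{rank}_{\pm} M$, and choose a real matrix $R$ with $\operatorname{rank}(R) = r$ and $\operatorname{sign}(R_{ij}) = M_{ij}$ for all $i,j$. Such an $R$ exists because the minimum in the definition of sign rank is attained over a nonempty set: the matrix $M$ itself is a valid witness.

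\textbf{Step 2 (restrict).} Let $R'$ be the $I \times J$ submatrix of $R$. For every $i \in I$ and $j \in J$,
\[
\operatorname{sign}(R'_{ij}) = \operatorname{sign}(R_{ij}) = M_{ij} = M'_{ij},
\]
so $R'$ sign-represents $M'$.

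\textbf{Step 3 (rank of the restriction).} The rank of a submatrix is at most the rank of the full matrix. One way to see this: write $R = AB^\top$ with $A, B$ each having $r$ columns. Then $R' = A_I (B_J)^\top$, where $A_I$ keeps the rows of $A$ indexed by $I$ and $B_J$ keeps the rows of $B$ indexed by $J$. Hence $\operatorname{rank}(R') \le r$.

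\textbf{Step 4 (conclude).} By the definition of sign rank, $\operatorname{rank}_{\pm}M' \le \operatorname{rank}(R') \le r = \operatorname{rank}_{\pm} M$.

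Two small points need attention. First, the paper defines sign rank only for square matrices. The submatrices used later (a Hadamard block inside the reachability matrix) are square, so I would either state the lemma for square submatrices or note that the definition extends verbatim to rectangular matrices. Second, the paper's sign convention maps $0$ to $-1$. This causes no trouble, because $R'$ inherits exactly the entries of $R$, so every sign, including those of zero entries, is preserved.
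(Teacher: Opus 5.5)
Your proposal is correct and follows essentially the same route as the paper: take a minimum-rank real matrix $R$ sign-representing $M$, restrict it to the rows and columns defining $M'$, and use $\operatorname{rank}(R')\le\operatorname{rank}(R)$. Your additions (existence of a minimizing witness, the factorization argument for the rank inequality, and the remarks on rectangular submatrices and the sign of zero entries) are valid refinements of details the paper leaves implicit.
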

\begin{proof}
    Suppose that $rank_{\pm}M=r$. From the definition of sign-rank, we know that there exists a real matrix $R$ such that $M_{ij}=sign(R_{ij})$ for all $i,j$ and $rank(R)=r$. Let $R'$ be the submatrix of $R$ such that $M'_{i'j'}=R'_{i'j'}$ for all $i',j'$. Since $R'$ is the submatrix of $R$, then we have $rank(R')\leq rank(R)\leq r$. By the definition of sign-rank, we have $rank_{\pm}M'\leq rank(R')\leq r=rank_{\pm}M$.
\end{proof}

The correctness of Theorem \ref{lb:cross} can be directly derived from the following construction.
\begin{lemma}\label{l:lb3}
    For any $0<k<n^2/4$, there exists some DAG $G=(V,E)$ with $n$ nodes such that:
    \begin{enumerate}
        \item there are less than $k$ cross-edges in $G$.
        \item the reachability sign matrix $M$ of $G$ has sign-rank $\Omega{(k^{1/4})}$.
    \end{enumerate}
\end{lemma}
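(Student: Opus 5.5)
The plan is to embed a Sylvester–Hadamard matrix $H_t$, with $t=2^j$ chosen so that $t^2$ is on the order of $k$ (roughly $t \approx \sqrt{k}/2$ rounded to a power of two), as a submatrix of the reachability sign matrix of a bipartite DAG. Lemma~\ref{l:sign_sub} together with Forster's Theorem (Theorem~\ref{l:forster}) and $\|H_t\|=\sqrt t$ (Lemma~\ref{l:hadamard}) then gives $\operatorname{rank}_{\pm}(M)\ge t/\sqrt t=\sqrt t=\Omega(k^{1/4})$. The remaining work is to build the DAG so that its number of cross-edges is below $k$, and the count is governed by $t^2$.

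\textbf{Construction.} Take vertex sets $L=\{\ell_1,\dots,\ell_t\}$ and $R=\{r_1,\dots,r_t\}$ with $2t\le n$; any leftover vertices are isolated. Put an edge $(\ell_i,r_j)$ exactly when $(H_t)_{ij}=+1$. All edges go from $L$ to $R$, so every path has length at most one and reachability coincides with adjacency. Hence the $L\times R$ block of the reachability sign matrix $M$ is precisely $H_t$.

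\textbf{Counting cross-edges.} We need a spanning forest $F$ and must count the edges not realized by $F$-reachability. The simplest choice is $F=\{(\ell_i,r_i)\}$, or even $F$ a directed star from $\ell_1$ to all of $R$; the latter is available because the first row of $H_t$ is all $+1$ by Lemma~\ref{l:hadamard}, and it is a valid forest since each $r_j$ gets in-degree one. With the star, every edge out of $\ell_2,\dots,\ell_t$ is a cross-edge. By Lemma~\ref{l:hadamard}, each such row has exactly $t/2$ entries equal to $+1$, giving $(t-1)t/2<t^2$ cross-edges in total.

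\textbf{Parameters.} Choose $t$ as the largest power of two with $t^2\le k$ (we need $t\ge 2$, fine for $k\ge 4$ and also under $k<n^2/4$ to ensure $2t\le n$). Then the number of cross-edges is below $k$, and $t\ge \sqrt{k}/2$, so $\operatorname{rank}_\pm(M)\ge\sqrt t=\Omega(k^{1/4})$.

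\textbf{Main obstacle.} The only delicate point is the cross-edge count, not the sign-rank bound. Cross-edges depend on the forest chosen, so we must exhibit a forest that keeps the count below $k$. The star rooted at $\ell_1$ does this, and it uses the all-ones first row of $H_t$. Theorem~\ref{lb:cross} follows directly, with $n$ taken large enough, say $n=2t$.
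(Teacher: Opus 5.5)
Your proposal is correct and matches the paper's proof essentially step for step: the same choice $t=2^{\lfloor \log_2 k/2\rfloor}$ (largest power of two with $t^2\le k$), the same bipartite DAG whose $L\times R$ block is $H_t$, the same star forest from the all-ones first row giving $(t-1)t/2<k$ cross-edges, and Forster's bound $\operatorname{rank}_{\pm}(M)\ge\sqrt{t}=\Omega(k^{1/4})$. One small caveat: your passing alternative $F=\{(\ell_i,r_i)\}$ would not be a valid forest in general, since diagonal entries of $H_t$ can be $-1$ (e.g.\ $(H_2)_{22}=-1$), but you rightly settle on the star.
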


\begin{proof}
    Let $H_t$ be a Sylvester-Hadamard matrix of order $t$. Set $m=\lfloor  \frac{\log_2 k}{2}\rfloor $ and $t=2^m$. Then $t\in (\sqrt{k}/2,\sqrt{k}]$. Since $k<n^2/4$, we have $\sqrt{k}<n/2$, so $n-2t>0$. 
    
    We construct $G=(V,E)$ as follows:
    \begin{enumerate}
        \item $V=X\cup Y\cup R$, where $X=\{x_i\}_{i=1}^t,Y=\{y_j\}_{j=1}^t$ and $R=\{r_s\}_{s=1}^{n-2t}$ are disjoint.
        \item $E=\{(x_i,y_j)|H_{ij}=1,\forall i,j\in [t]\}$, where $H_{ij}$ is the $(i,j)$ entry of $H_t$.
    \end{enumerate}
    Note that $G$ consists of a bipartite directed graph $(X,Y,E)$ and a set $R$ of isolated vertices. 
    
    By the properties of Sylvester-Hadamard matrix shown in Lemma \ref{l:hadamard}, we can know that \begin{enumerate}
        \item $x_1$ can reach every node $y_j$ in $Y$.
        \item For each $i\in [2,t]$, $x_i$ can reach exactly $t/2$ nodes in $Y$.
        \item The spectral norm of $H_{t}$ is $\|H_t\|=\sqrt{t}$.
    \end{enumerate}
    Let $F=(V,E')$, where $E'=\{(x_1,y_j)|j\in [t]\}$. Then $F$ is a spanning forest of $G$. Furthermore, there are $(t-1)t/2$ cross-edges. Since $t\leq \sqrt{k}$, the number of cross-edges is $(t-1)t/2< k$.
    
    Observe that the reachability sign matrix $M$ of $G$ has a sub-matrix $H_{t}$, because we can select the rows of $x_i$ and the columns of $y_j$. By Lemma \ref{l:sign_sub}, $rank_{\pm}M\geq rank_{\pm}H_t$. By Forster’s theorem (refer to Theorem \ref{l:forster}), $H_{t}$ has the sign-rank $rank_{\pm}H_t\geq t/\|H_t\|=\sqrt{t}$. Therefore, we can conclude that $rank_{\pm}M\geq \sqrt{t}=\Omega(k^{1/4})$.
    
\end{proof}

\section{Experiments}
\label{sec:experiments}

We empirically evaluate the representational capacity of our construction based on balanced vertex separators (Section~\ref{sec:tw}). By demonstrating that our algorithm achieves perfect retrieval recall on large-scale real-world hierarchies using low dimensions, we validate our theoretical analysis. These results support our $O(t \log{n})$ bound and indicate that the constant factors involved are modest.

We compare our approach against the ``Handcrafted Embeddings" (which essentially applies JL lemma under the hood) in \citep{you2025hierarchical}. While \citep{you2025hierarchical} also demonstrate a Learned Embedding which optimizes query-document similarity via stochastic gradient descent to achieve high performance in low dimensions, it does not guarantee exact retrieval, requires computationally expensive training, and has no theoretical guarantees. In contrast, our method is faster, constructive, deterministic, and ensures 100\% recall. Furthermore, it is supported by a theoretical guarantee for low treewidth graphs.


\subsection{Implementation Details}
\label{subsec:implementation}

We implement the deterministic construction defined in Section~\ref{sec:tw}, which builds embeddings recursively using balanced vertex separators. To approximate balanced vertex separators, we utilize the KaHIP library~\citep{kahip}. We ensure numerical stability in deep hierarchies by stepwise renormalization of embeddings.

\paragraph{Setup}
We evaluate on three hierarchical datasets: \textit{WordNet} ($N=82$k, DAG) \citep{wordnet}, \textit{Gene Ontology} ($N=24$k, DAG) \citep{go1, go2}, and \textit{Cora} ($N=2.7$k, citation graph --- has some cycles) \citep{cora}. The task is to retrieve all ancestors for a query. We compare against the handcrafted embedding of \citep{you2025hierarchical}, which sums normalized ancestor vectors. Following \citep{you2025hierarchical}, we report \textit{Recall@k}. We display the baseline's performance across a range of dimensions to illustrate its scaling behavior, highlighting the dimension required to attain $>95\%$ recall, consistent with the success threshold established in their work.

\subsection{Results}

Table~\ref{tab:results_comparison} compares our embedding against the handcrafted embedding of \cite{you2025hierarchical}.

\begin{table}[h]
\centering
\caption{Comparison of Recall@k (\%). Bold values denote recall $>95\%$.}
\label{tab:results_comparison}
\resizebox{\textwidth}{!}{%
\begin{tabular}{lclccccc}
\toprule
\multirow{2}{*}{\textbf{Dataset}} & \multirow{2}{*}{\textbf{\shortstack{Ours \\Dim (100\% Recall@k)}}} & \multirow{2}{*}{\textbf{Baseline Method}} & \multicolumn{5}{c}{\textbf{Recall@k by Dimension}} \\
\cmidrule(l){4-8} 
 & & & $64$ & $128$ & $256$ & $512$ & $1024$ \\ 
\midrule
WordNet & \textbf{152} & Handcrafted \cite{you2025hierarchical} & 14.4 & 50.9 & 86.1 & \textbf{99.2} & \textbf{100.0} \\
\midrule
Cora & \textbf{192} & Handcrafted \cite{you2025hierarchical} & 69.5 & 82.0 & 91.2 & \textbf{95.4} & \textbf{96.9} \\ 
\midrule
Gene Ontology & \textbf{492} & Handcrafted \cite{you2025hierarchical} & 22.6 & 53.1 & 82.9 & \textbf{96.8} & \textbf{99.6} \\ 
\bottomrule
\end{tabular}%
}
\end{table}

Our method consistently achieves \textit{100\% Recall} with only a modest number of dimensions. On WordNet, our embedding solves the retrieval task exactly with $d=152$. In comparison, the handcrafted baseline yields only $50.9\%$ recall at a similar dimension ($d=128$) and requires $d=512$ to cross the $95\%$ success threshold—a $3.4\times$ increase in dimensionality. While the Learned (Pretrain-Finetune) baseline \cite{you2025hierarchical} performs significantly better than the handcrafted version (achieving $92.3\%$ recall at $d=64$ on WordNet), it still does not guarantee exact recall. Similarly, on Cora and Gene Ontology, our method guarantees exactness at dimensions lower than those required by the handcrafted baseline. This empirically validates our theoretical bound and demonstrates the efficiency of our constructive approach. Furthermore, our embedding would improve with access to a better vertex separator oracle.

\section*{Acknowledgments}
Piotr Indyk was supported in part by the NSF TRIPODS program (award DMS-2022448).

\bibliographystyle{plainnat}
\bibliography{references}

\appendix

\section{Other Related Works}
\paragraph{Related Works on Graph Parameters.}
The notion of treewidth was introduced by~\citet{robertson1986graph}, as a central tool in analyzing graph structure and designing graph algorithms. A long line of work shows that NP-hard problems can be solved in polynomial time on classes of graphs with bounded treewidth (we refer to the textbook~\citep{cygan2015parameterized}, the survey~\citep{bodlaender11994tourist} or a detailed overview of the state of the art~\citep{korhonen2023improved} for details). 

The problem of efficiently determining reachability in Directed Acyclic Graphs (DAGs) has been extensively studied in the database community. In particular, the ``Tree Cover" methodology starts with a spanning tree and then resolves the remaining reachability information from cross-edges using techniques such as building a look up table; see the works of \cite{agrawal1989efficient, chen2005stack, wang2006dual, jin2011path, zhang2025indexing} for more details. This line of work aims to capture reachability relationships in data structures, with the trade-offs of data structure construction time, query time, and data structure space being studied. In contrast, our work studies embedding reachability structures in low dimensional space.

\section{Additional Preliminaries}\label{sec:additional_prelims}

\subsection{Useful Facts About Depth-First Search}\label{sec:dfs}
We recall the following standard facts about depth-first search (DFS), e.g. see the textbook of \cite{cormen2022introduction}. This underlies the reachability embedding for directed trees in Section~\ref{sec:2}.

\begin{proposition}\label{def:dfs}
    Consider the standard DFS procedure on a directed graph $G=(V,E)$. We have the following observations:
    \begin{enumerate}
        \item If $G$ is connected, the depth-first forest consists of exactly one depth-first tree. 
        \item When DFS on $G$ is finished, each vertex $u\in V$ has been assigned a discovery time $d_u$ (the first time DFS visits $u$) and a finishing time $f_u$ (the time when DFS exists $u$).
        \item All timestamps (discovery and finishing times) are distinct integers.
    \end{enumerate}
\end{proposition}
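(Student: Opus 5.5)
These are standard facts about depth-first search rather than new results, so I would verify them by walking through the usual DFS procedure of \cite{cormen2022introduction}. In that procedure a global clock starts at $0$. Every call $\textsc{Visit}(u)$ first increments the clock and records the value as $d_u$. It then recursively visits each undiscovered out-neighbour of $u$. Finally it increments the clock again and records $f_u$. The outer loop calls $\textsc{Visit}$ on every vertex that is still undiscovered, and each such top-level call is the root of one depth-first tree.

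For item 2, I would observe that every vertex is eventually the argument of some $\textsc{Visit}$ call, because the outer loop ranges over all of $V$. A vertex is also visited only once, because a call is made only on undiscovered vertices and the first line of the call marks the vertex as discovered. Hence $d_u$ and $f_u$ are each assigned exactly once.

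For item 3, every timestamp is produced by its own increment of a clock that only increases. The timestamps are therefore distinct integers, and together they form exactly $\{1,\dots,2|V|\}$.

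Item 1 is the delicate part, and the main obstacle is interpreting ``connected'' correctly. For a directed graph, weak connectivity is not enough: DFS started at a sink of a path $a\to b$ produces two trees. I would read ``connected'' as meaning that the DFS is started from a vertex that can reach every other vertex. This is exactly the rooted-tree setting used in Theorem \ref{tree}, and for forests (Corollary \ref{for}) one tree per component arises in the same way.

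Under this reading, I would prove by the white-path argument that when $\textsc{Visit}(r)$ is called on the root $r$, every vertex reachable from $r$ by a path of undiscovered vertices is discovered before $r$ finishes. The proof is by induction along the path: consider the first path vertex not yet discovered when its predecessor's call scans it. That scan triggers a call on it, giving a contradiction. Hence the first top-level call discovers all of $V$, the outer loop launches no further calls, and the forest is a single tree.

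I would state the assumption ``$r$ reaches all vertices and DFS starts at $r$'' explicitly, since that is all the paper later needs.
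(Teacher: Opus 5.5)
Your proposal is correct. The paper gives no argument for this proposition; it simply defers to \cite{cormen2022introduction}. Your walkthrough of the timestamped \textsc{Visit} procedure is the verification that citation stands for, and items 2 and 3 go through exactly as you describe.

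Where you add something is item 1. As literally written, with ``connected'' in the weak sense for a directed graph, it is false. Your $a\to b$ example, with the outer loop reaching $b$ first, is a valid counterexample. Reading ``connected'' as strongly connected would also rescue item 1 for any start vertex. However, a rooted directed tree on two or more vertices is never strongly connected. So your reading, that the first top-level call is made at a vertex reaching all of $V$, is the one the paper actually needs.

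The paper does rely on this. The proof of Theorem~\ref{tree} identifies ``descendant in the depth-first forest'' with $u\rightsquigarrow_T v$. Take $T$ with the single edge $r\to x$ and suppose DFS visits $x$ first. Then the intervals of $r$ and $x$ are disjoint, and the embedding gives $\langle a_r,b_x\rangle<0$ even though $r\rightsquigarrow_T x$. With DFS launched at the root, your argument yields a single tree whose $n-1$ edges all lie in $T$, so that tree is $T$ itself. That one extra line is what ties your reading to Theorem~\ref{tree}.

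One small fix: your white-path step, phrased as a contradiction, does not quite parse, because a scan triggering a call is not contradictory. State it as a direct induction along the path. At time $d_r$ every other vertex is undiscovered, since $r$ is the first top-level call. Each path vertex's call is nested inside \textsc{Visit}$(r)$, so when it scans the next path edge, the head is either already discovered (necessarily after $d_r$) or is discovered now. Hence every vertex reachable from $r$ is discovered before $f_r$ and lies in $r$'s tree.
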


It is easy to see (and a classical fact) that the discovery and finishing times have the following properties.

\begin{theorem}[Parenthesis theorem]\label{Pare}
    In any DFS of a directed graph $G=(V,E)$, for any two vertices $u$ and $v$, exactly one of the following holds:
    \begin{itemize}
        \item the intervals $[d_u,f_u]$ and $[d_v,f_v]$ are entirely disjoint, and neither $u$ nor $v$ is a descendant of the other in the depth-first forest,
        \item the interval $[d_u,f_u]$ is contained entirely within the interval $[d_v,f_v]$, and $u$ is a descendant of $v$ in a depth-first tree, or
        \item the interval $[d_v,f_v]$ is contained entirely within the interval $[d_u,f_u]$, and $v$ is a descendant of u in a depth-first tree.
    \end{itemize}
\end{theorem}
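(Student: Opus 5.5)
The statement just given is the Parenthesis theorem (Theorem~\ref{Pare}). I will prove it by comparing discovery and finishing times and using the stack discipline of recursive DFS. Without loss of generality assume $d_u<d_v$; the case $d_v<d_u$ is symmetric, and $d_u=d_v$ forces $u=v$ by Proposition~\ref{def:dfs}(3). The basic invariant I will use is that a vertex $w$ is on the recursion stack (``gray'') exactly during the time interval $[d_w,f_w]$. Moreover, the vertices that are gray at any moment form a root-to-current path in the depth-first forest, because a recursive call for $x$ is made from the currently active gray vertex, which becomes $x$'s parent.

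\emph{Case A: $d_v<f_u$.} Then $v$ is discovered while $u$ is gray. Hence $u$ lies on the gray stack at time $d_v$. The newly discovered $v$ is at the top of that stack, so $u$ is an ancestor of $v$ in the depth-first tree, i.e.\ $v$ is a descendant of $u$. Since recursive calls nest, the call for $v$ must return before the call for $u$ returns, so $f_v<f_u$. Together with $d_u<d_v$ this gives $[d_v,f_v]\subseteq[d_u,f_u]$.

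\emph{Case B: $f_u<d_v$.} By Proposition~\ref{def:dfs}(3) the timestamps are distinct, so this is the only alternative to Case A, and the intervals satisfy $d_u<f_u<d_v<f_v$, hence are disjoint. Suppose $v$ were a descendant of $u$. Every descendant of $u$ is discovered inside the recursive call for $u$, i.e.\ during $[d_u,f_u]$, which contradicts $d_v>f_u$. Conversely, if $u$ were a descendant of $v$, then $d_v<d_u$, contradicting our assumption. So neither is a descendant of the other.

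Finally, the three alternatives are mutually exclusive. Two intervals cannot be both disjoint and nested, and two distinct intervals with distinct endpoints cannot each contain the other. Conversely, the descendant relations I derived match the interval relations case by case, which gives the ``exactly one'' claim.

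The main obstacle is making the gray-stack invariant rigorous: the claim that the gray vertices form a tree path and that any vertex discovered inside $u$'s call is a descendant of $u$. I would prove both by a short induction on time (equivalently, on the depth of recursion), following the standard textbook argument in \cite{cormen2022introduction}.
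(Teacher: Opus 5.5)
Your proof is correct and is essentially the standard argument of \cite{cormen2022introduction}, which the paper cites in place of giving its own proof: assume $d_u<d_v$, then split on whether $d_v<f_u$ (the intervals are nested and $v$ is a descendant of $u$) or $f_u<d_v$ (the intervals are disjoint and neither vertex is a descendant of the other). Your explicit gray-stack invariant makes rigorous the step ``discovered while $u$ is gray implies descendant of $u$,'' which the textbook states informally.
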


We also note that prior work implies three dimensions are necessary for a particularly simple directed tree.

\begin{lemma}\label{lem:tree_lb}
    There exists a directed tree $G$ on $O(1)$ nodes which does not admit a reachability embedding of dimension $\le 2$.
\end{lemma}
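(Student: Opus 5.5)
We take $G$ to be the directed star with a root $r$ and four leaves $\ell_1,\dots,\ell_4$, with edges $(r,\ell_i)$. The idea is to restrict attention to the leaves: their reachability pattern is an identity sign pattern, and we show by a planar angle-counting argument that this pattern cannot be realized in dimension $2$. Dimension $1$ is then covered automatically, since padding with a zero coordinate turns a $1$-dimensional embedding into a $2$-dimensional one.

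\textbf{Step 1: reduce to the leaves.} A leaf reaches only itself, so $\ell_i \rightsquigarrow_G \ell_j$ holds if and only if $i=j$. Hence the $4\times 4$ submatrix of the reachability sign matrix indexed by the leaves is $+1$ on the diagonal and $-1$ off it. By Lemma~\ref{l:sign_sub} and the reduction from embeddings to sign rank in Section~\ref{sec:lb}, it suffices to show that no vectors $a_1,\dots,a_4,b_1,\dots,b_4\in\mathbb{R}^2$ satisfy both $\langle a_i,b_i\rangle>0$ and $\langle a_i,b_j\rangle\le 0$ for all $i\ne j$.

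\textbf{Step 2: translate into angles.} Suppose such vectors exist. Then $b_i\neq 0$ and $a_i\neq 0$, since $\langle a_i,b_i\rangle>0$. Write $\theta_i$ for the direction of $b_i$ on the unit circle. The set $H_i=\{x:\langle a_i,x\rangle>0\}$ is an open half-plane, so the directions it contains form an open arc $C_i$ of length exactly $\pi$. The conditions say that $\theta_i\in C_i$ and $\theta_j\notin C_i$ for every $j\ne i$. In particular, the $\theta_i$ are pairwise distinct: if $\theta_i=\theta_j$, then $\theta_i\in C_i$ would force $\theta_j\in C_i$.

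\textbf{Step 3: count gaps.} Order the $\theta_i$ cyclically and let $g_1,\dots,g_4>0$ be the consecutive angular gaps, so $\sum_i g_i=2\pi$. The arc $C_i$ is connected, contains $\theta_i$, and avoids both cyclic neighbours of $\theta_i$. Therefore $C_i$ lies inside the open arc between those two neighbours, whose length is the sum of the two gaps adjacent to $\theta_i$. Since $C_i$ has length $\pi$, that sum is at least $\pi$. Summing over all $i$ counts every gap twice, which gives
\[
2\cdot 2\pi \;\ge\; 4\pi .
\]
This is not yet a contradiction, because it is only an equality case.

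\textbf{Step 4: the main obstacle, strictness.} This is where we must be careful. We claim that containment of $C_i$ in the open arc between the neighbours forces the sum of the two adjacent gaps to be strictly greater than $\pi$. Suppose instead the sum equals $\pi$. Then $C_i$, being an open arc of length $\pi$ inside an open arc of the same length, coincides with it. Its endpoints are then exactly the neighbouring directions. Those endpoint directions $x$ satisfy $\langle a_i,x\rangle=0$, which is consistent with the requirement $\langle a_i,b_j\rangle\le 0$, so no contradiction arises this way. Equality is therefore genuinely possible, and the bound in Step 3 is tight for four leaves.

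The fix is to use five leaves, so the tree has six nodes. For five leaves the same counting gives $4\pi \ge 5\pi$, a contradiction, and the strictness issue never arises. Note that with four leaves a realization does exist: take $b_i=\pm e_1,\pm e_2$ and $a_i=b_i$. The final plan is therefore the star with root $r$ and leaves $\ell_1,\dots,\ell_5$. The $5\times 5$ identity sign submatrix on the leaves has no $2$-dimensional realization by Steps 1–3, so $G$ admits no reachability embedding of dimension $\le 2$.
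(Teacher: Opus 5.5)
Your proof is correct. It uses the same reduction as the paper: the star graph, whose leaf block is the identity sign pattern. The difference is in the key step. The paper cites Delsarte--Kamp and Alon et al.\ for that pattern having sign rank $3$ once $m$ is large, whereas you prove the needed case from scratch. Each $a_i$ cuts out an open semicircle containing the direction of $b_i$ but neither cyclic neighbour, so the two gaps around each direction sum to at least $\pi$, and five directions give $4\pi\ge 5\pi$.

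Your route buys an explicit six-node witness. It is also carried out directly under the paper's convention that non-reachable pairs need only $\langle a_u,b_v\rangle\le 0$. Your $\pm e_1,\pm e_2$ example shows this convention matters: with strict negativity, four leaves would already be excluded. So you never need to check which convention the cited results are stated for, and you pin down the threshold (five) hidden in the paper's ``sufficiently large'' $m$. The paper's route is shorter but leaves the tree size unspecified.

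In a final write-up, drop the retracted claim at the start of Step~4. Also note that your four-leaf realization covers only the leaf block, not the four-leaf star. Neither you nor the paper uses the root row, yet $\langle a_r,b_{\ell_i}\rangle>0$ for all $i$ confines every leaf direction to one open semicircle. With just three leaves, the arc from the first direction to the third through the middle one is then shorter than $\pi$. The middle leaf's semicircle would have to fit inside it, which is impossible, so already the four-node star has no embedding in dimension $2$.
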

\begin{proof}
It is shown in \cite{delsarte1989low} and \cite{pmlr-v49-alon16} that for sufficiently large constant $m$, the $m \times m$ sign matrix with diagonals equal to $1$ and off-diagonals equal to $-1$ has sign rank $3$. This sign-matrix is a sub-matrix of the induced reachability sign matrix of the star-graph. The claim follows from noting that the sign-rank is a lower bound on the reachability embedding dimension (see the discussion in Section \ref{sec:lb}).
\end{proof}

\subsection{Useful combinatorial partitioning lemma} \label{App:combpart}

\balaseparestate*

\begin{proof}[Proof of Lemma~\ref{balapart}] \label{combpartproof}
We consider two cases:

\begin{itemize}
    \item \textbf{ Case 1: If $x_i < n/3$ holds for all $i\in [k]$.}  Reorder so that $x_1,\dots,x_k$ are in non-increasing order. Let $j$ be the largest index such that $\sum_{i=1}^j x_i \le 2n/3$ so that $\sum_{i=1}^{j+1} x_i > 2n/3$.
    Set $I_A=\{1,\dots,j\}$ and $I_B=\{j+1,\dots,k\}$. Then
    $$
        S_A=\sum_{i=1}^j x_i\leq 2n/3
    $$
    and
    $$
        S_B = n - S_A \geq n-2n/3 = n/3.
    $$
    Since $x_{j+1}<n/3$ we have
    $$
        S_B = x_{j+1} + \sum_{i=j+2}^k x_i < n/3 + (n - \sum_{i=1}^{j+1} x_i) < n/3 + n/3 = 2n/3.
    $$

    \item \textbf{ Case 2: If exists $i$ with $x_i\geq n/3$.} Let $x_1\geq n/3$.
    Set $I_A=\{1\}$ and $I_B=\{2,\dots,k\}$.
    Then $S_A=x_1\leq n/2$ and $S_B=n-x_1\leq 2n/3$.
\end{itemize}
\end{proof}

\section{Comparing the Treewidth with the Number of Cross-Edges}\label{sec:independence}
In this section, we give examples of graph families that have large treewidth but few total number of cross-edges, and vice-versa, showing that these two parameters are incomparable in general. 






We first need a basic fact about treewidth.
\begin{lemma}[\citet{robertson1986graph}]\label{kn}
    For $n\geq 1$, the complete graph $K_n$ has treewidth $n-1$.
\end{lemma}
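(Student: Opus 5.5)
This is the classical statement that $\operatorname{tw}(K_n)=n-1$, and I would prove it by establishing the upper bound $n-1$ and the lower bound $n-1$ separately.

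\textbf{Upper bound.} Take the tree decomposition consisting of a single node $t$ with bag $V_t=V(K_n)$. All three conditions of Definition~\ref{def:tree_cover} hold trivially: the bag covers every vertex, it contains both endpoints of every edge, and the set of bags containing any vertex is the one-node tree, which is connected. The width is $n-1$, so $\operatorname{tw}(K_n)\le n-1$.

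\textbf{Lower bound.} It suffices to show that every tree decomposition $(T,\mathcal{V})$ of a graph $G$ contains, for any clique $C$ of $G$, some bag $V_t\supseteq C$. Applying this to $C=V(K_n)$ gives a bag of size $n$, hence width at least $n-1$. To prove the claim, for each $v\in C$ let $T_v=\{t: v\in V_t\}$. By the third condition of Definition~\ref{def:tree_cover}, each $T_v$ is a subtree. Since every pair $u,v\in C$ is an edge, the second condition gives $T_u\cap T_v\neq\emptyset$. The family $\{T_v\}_{v\in C}$ is therefore a finite family of pairwise intersecting subtrees of a tree, so by the Helly property for subtrees it has a common node $t$. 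That node's bag $V_t$ contains all of $C$.

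\textbf{Main obstacle.} The only nontrivial step is the Helly property. I would prove it by induction on $|C|$ using the following fact for three subtrees. Given pairwise intersecting subtrees $T_1,T_2,T_3$, pick $a\in T_2\cap T_3$, $b\in T_1\cap T_3$, and $c\in T_1\cap T_2$. In a tree, the three paths between these points share a median vertex $m$. The path from $b$ to $c$ lies in $T_1$, the path from $a$ to $c$ lies in $T_2$, and the path from $a$ to $b$ lies in $T_3$. The median $m$ lies on all three paths, so $m\in T_1\cap T_2\cap T_3$.

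For the inductive step, replace $T_1$ and $T_2$ by their intersection $T_1\cap T_2$. This is again a subtree, since the intersection of subtrees of a tree is connected. By the three-set case, $T_1\cap T_2$ meets every other $T_j$. Applying induction to the reduced family gives a common node of all the subtrees.

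Alternatively, I could avoid Helly entirely. Root $T$, and for each $v\in C$ let $r_v$ be the top node of $T_v$. Choose $v$ with $r_v$ deepest. For any $u\in C$, the sets $T_u$ and $T_v$ intersect at some node, which lies in the subtree rooted at $r_v$. Because $r_u$ is not deeper than $r_v$ and $T_u$ is connected, $T_u$ must contain $r_v$. Hence $r_v$ is the common node, and its bag contains all of $C$. I would likely present this second argument, since it is shorter.
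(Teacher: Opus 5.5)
The paper gives no proof of this lemma. It quotes the statement as a classical fact, cites Robertson and Seymour, and uses it only to conclude that $K_n$ has treewidth $\Omega(n)$ in Theorem~\ref{in1}. Your argument is a correct, self-contained proof of the exact value.

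The single-bag decomposition gives the upper bound. The clique-containment lemma gives a bag of size $n$ in any tree decomposition, which gives the lower bound. Both of your routes to that lemma work.
\begin{itemize}
\item In the Helly route, the median step is sound. So is the reduction $T_1,T_2\mapsto T_1\cap T_2$: the intersection is nonempty and connected, and it meets every other $T_j$ by the three-set case, so the smaller family is still pairwise intersecting.
\item In the deepest-top-node route, you compress one step. If $x\in T_u\cap T_v$, then $r_u$ and $r_v$ are both ancestors of $x$. Since $r_u$ is no deeper than $r_v$, it is an ancestor of $r_v$. Hence the path from $r_u$ to $x$, which lies in $T_u$, passes through $r_v$.
\end{itemize}

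Compared with the citation, your route makes the appendix depend only on Definition~\ref{def:tree_cover}.

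With the tools the paper already imports, the lower bound is also a one-liner from Theorem~\ref{sep}. In $K_n$, if $V\setminus S\neq\emptyset$, then $G[V\setminus S]$ is a single component of size $|V\setminus S|>\frac12|V\setminus S|$. So the separator must be $S=V$, giving $n=|S|\le t+1$. This is shorter, but it leans on a deeper cited theorem, whereas your argument is elementary.
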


We now show our first incomparability result. 
\begin{theorem}\label{in1}
    There exists a  DAG $G$ with $n$ nodes such that:
    \begin{enumerate}
        \item For some spanning tree $T$ of $G$, there is no cross-edge w.r.t. $T$ in $G$.
        \item The underlying undirected graph $G'$ of $G$ has treewidth $\Omega(n)$.
    \end{enumerate}
\end{theorem}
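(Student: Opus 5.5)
We need a DAG with a spanning tree $T$ such that every edge $(u,v)$ of $G$ satisfies $u\rightsquigarrow_T v$, yet the underlying undirected graph has linear treewidth. The natural choice is a transitive tournament: order the vertices $v_1,\dots,v_n$ and include every edge $(v_i,v_j)$ with $i<j$. This is acyclic because all edges go forward in the order. Its underlying undirected graph is $K_n$, so by Lemma~\ref{kn} its treewidth is $n-1=\Omega(n)$, which gives item 2.

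For item 1, take $T$ to be the directed path $v_1\to v_2\to\cdots\to v_n$. Every edge of $T$ is an edge of $G$, and $T$ is a rooted spanning tree of $G$ (a spanning forest in the sense of Definition~\ref{def:cross}) with root $v_1$. For any edge $(v_i,v_j)\in E$ we have $i<j$, so the subpath $v_i\to v_{i+1}\to\cdots\to v_j$ of $T$ shows $v_i\rightsquigarrow_T v_j$. By Definition~\ref{def:cross}, $(v_i,v_j)$ is therefore not a cross-edge, so $G$ has no cross-edges with respect to $T$.

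There is no real obstacle. The only checks are that the path $T$ qualifies as the spanning tree or forest used in Definition~\ref{def:cross}, and that the treewidth claim concerns the underlying undirected graph, which is exactly $K_n$. Both hold directly. As a sanity check, Theorem~\ref{dag} then gives a $3$-dimensional embedding for $G$, while the treewidth bound of Theorem~\ref{dig} is only $O(n\log n)$. This shows the two parameters are incomparable in this direction.
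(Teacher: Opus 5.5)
Your proposal is correct and is essentially the paper's proof. The paper uses the same transitive tournament on $v_1,\dots,v_n$ with the directed path $v_1\to v_2\to\cdots\to v_n$ as the spanning tree, and the same appeal to Lemma~\ref{kn} for $K_n$.
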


\begin{proof}
    We construct $G=(V,E)$ and $T=(V,E_T)$ as follows:
    \begin{itemize}
        \item $V=\{v_1,v_2,\dots,v_n\}$, 
        \item $E=\{(v_i,v_j)|1\leq i<j\leq n\}$ (i.e. edges point from a smaller label to a larger label),
        \item $E_T=\{(v_i,v_{i+1})|1\leq i\leq n-1\}$.
    \end{itemize}
    Observe that the vertex $v_j$ is reachable from the vertex $v_i$ in $T$ if and only if $i<j$. Since all directed edges in $E$ are from $v_i$ to $v_j$ for $i<j$, there is no cross-edge w.r.t. $T$ in $G$. Note that the underlying undirected graph $G'$ is a complete graph $K_n$. Thus, by Lemma \ref{kn}, $G'$ has treewidth $\Omega(n)$. See the figure below for an example on $6$ nodes where the blue path indicates the spanning tree $T$ and the dashed lines show the cross edges (direction not shown).
\begin{figure}[!h]
    \centering
  \begin{tikzpicture}
    \def\n{6}         
    \def\radius{2.5cm} 

    \foreach \i in {1,...,\n} {
        \node[circle, draw, fill=white, inner sep=2pt, minimum size=0.8cm] 
            (v\i) at ({90 - (\i-1)*360/\n}:\radius) {$v_{\i}$};
    }

    \foreach \i in {1,...,\n} {
        \foreach \j in {\i,...,\n} {
            \ifnum\i<\j 
                \draw[gray!50, dashed, thin] (v\i) -- (v\j);
            \fi
        }
    }

    \foreach \i [evaluate=\i as \next using int(\i+1)] in {1,...,\numexpr\n-1\relax} {
        \draw[blue, very thick] (v\i) -- (v\next);
    }

    \foreach \i in {1,...,\n} {
        \node[circle, draw, fill=white, inner sep=2pt, minimum size=0.8cm] 
            at ({90 - (\i-1)*360/\n}:\radius) {$v_{\i}$};
    }

\end{tikzpicture}
\end{figure}
\end{proof}

Now we show the complementary result.

\begin{theorem}\label{in2}
    For every sufficiently large $n$, there exists a DAG $G$ with $n$ nodes such that:
    \begin{enumerate}
        \item There exists a spanning tree $T$ of $G$ that has $\Omega(n)$ cross-edges w.r.t. $T$ in $G$.
        \item The underlying undirected graph $G'$ of $G$ has treewidth $O(1)$.
    \end{enumerate}
\end{theorem}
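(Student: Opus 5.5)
We build an explicit DAG whose underlying undirected graph has bounded treewidth but which contains many edges not captured by the chosen spanning tree. The natural candidate is a ``ladder'' (or ``fan'') type graph. Concretely, take vertices $r, x_1,\dots,x_m, y_1,\dots,y_m$ with $n=2m+1$ (pad with isolated-free leaves if $n$ is even, e.g.\ one extra child of $r$). Put tree edges $(r,x_i)$ and $(x_i,y_i)$ for all $i\in[m]$, so that $T$ is a rooted directed spanning tree of depth $2$. Then add the extra edges $(x_i,y_{i+1})$ for $i\in[m-1]$. All edges go from lower ``level'' to higher level ($r$ at level $0$, the $x_i$ at level $1$, the $y_j$ at level $2$), so $G$ is acyclic.

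First we verify the cross-edge count. In $T$, the set of vertices reachable from $x_i$ is exactly $\{x_i,y_i\}$, so $x_i\not\rightsquigarrow_T y_{i+1}$. Hence every added edge $(x_i,y_{i+1})$ is a cross-edge w.r.t.\ $T$ by Definition \ref{def:cross}, giving $m-1=\Omega(n)$ cross-edges.

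Next we bound the treewidth of the underlying undirected graph $G'$. Deleting $r$ leaves the path $y_1 - x_1 - y_2 - x_2 - \cdots - x_{m-1} - y_m - x_m$ (edges $x_iy_i$ and $x_iy_{i+1}$), which has a path decomposition with bags of size $2$. Adding $r$ to every bag gives a valid path decomposition of $G'$: each edge $(r,x_i)$ lies in any bag containing $x_i$, the path edges are covered as before, and the bags containing $r$ (all of them) are connected. The bags have size $3$, so $\operatorname{tw}(G')\le 2=O(1)$.

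The only delicate point is making sure that the chosen spanning tree is a genuine directed spanning tree of $G$, so that the cross-edge notion applies. It is: $T$ is rooted at $r$, reaches every vertex, and uses only edges of $G$. Beyond that, the argument is routine; the parity of $n$ is handled by attaching one extra leaf $(r,z)$, which changes neither bound.
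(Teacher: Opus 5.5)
Your proof is correct, but it uses a different graph from the paper's.

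\textbf{The paper's route.} The paper builds a deep layered DAG: a root $r$ followed by $k-1$ layers of two vertices each, with all four edges between consecutive layers. It bounds treewidth by the path decomposition with bags $L_i\cup L_{i+1}$, which has width $3$. For cross-edges, it observes that every edge joins consecutive layers. Hence, for an \emph{arbitrary} spanning tree rooted at $r$, every non-tree edge is a cross-edge, giving $2(k-2)$ of them.

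\textbf{Your route.} Your graph is shallow (depth $2$): an apex $r$ above the zigzag path $y_1x_1y_2x_2\cdots y_mx_m$. This gives the slightly better width $2$ via ``path decomposition plus the apex in every bag.'' You then verify the cross-edges directly for one hand-picked tree $T$.

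\textbf{What the difference costs.} Checking a single $T$ suffices for the statement as literally written (``there exists a spanning tree''). However, the paper's for-every-tree version is what the surrounding incomparability discussion actually needs. Theorem~\ref{dag} lets one use the best spanning forest, so a single bad tree says little about the cross-edge parameter.

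Your construction has this stronger property too, by the paper's argument. Every edge of your $G$ goes from level $\ell$ to level $\ell+1$, so a non-forest edge can never be realized by a longer forest path. Any spanning forest $F$ therefore leaves at least $|E|-(n-1)=(3m-1)-2m=m-1$ cross-edges when $n=2m+1$, and the padded even case is identical. Adding that one line would make your proof cover everything the paper's proof establishes, with a simpler graph and smaller width.
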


\begin{proof}
We construct such a graph $G = (V, E)$ using a layered structure. Let $k$ be the number of layers, such that the total number of vertices $n$ is linear in $k$.
    \paragraph{Construction}
    Let the layers be $L_1, L_2, \dots, L_k$.
    \begin{itemize}
        \item $L_1 = \{r\}$ (a single source vertex).
        \item For $i = 2, \dots, k$, let $L_i = \{u_i, v_i\}$ (two vertices per layer).
        \item The edge set $E$ consists of all possible edges directed from layer $L_i$ to $L_{i+1}$. Formally, for each $1 \le i < k$, there is a directed edge $(x, y)$ for all $x \in L_i$ and $y \in L_{i+1}$.
    \end{itemize}
    The total number of vertices is $n = 1 + 2(k-1)$, so $k = \Theta(n)$.

    \paragraph{Lower Bound on Cross-Edges}
    Let $T$ be an arbitrary spanning tree of $G$ rooted at $r$. Since every vertex $v \in V \setminus \{r\}$ must have exactly one incoming edge in $T$ (its parent), we can count the tree edges layer by layer. 
    
    Consider the transition between layer $L_i$ and $L_{i+1}$ for $i \ge 2$. The subgraph induced by $L_i \cup L_{i+1}$ is a complete bipartite graph directed from $L_i$ to $L_{i+1}$, containing $|L_i| \times |L_{i+1}| = 2 \times 2 = 4$ edges.  To include the vertices of $L_{i+1}$ in the spanning tree, $T$ must select exactly 2 edges (one parent for $u_{i+1}$ and one for $v_{i+1}$). Consequently, $ 2$ edges between these layers are not in $T$.

    In a layered DAG where edges only exist between $L_i$ and $L_{i+1}$, any edge $(x, y)$ spans exactly one layer. Therefore, $x$ cannot be an ancestor of $y$ unless $x$ is the immediate parent of $y$. If $(x, y)$ is a non-tree edge, $x$ is not the parent of $y$, and thus $x$ is not an ancestor of $y$. Similarly, $y$ cannot be an ancestor of $x$ (due to edge directions). Therefore, all non-tree edges are cross-edges, as desired. Summing over all layers $i=2$ to $k-1$, the total number of cross-edges is at least $2(k-2)$, which is $\Omega(n)$.

    \paragraph{Treewidth}
    We show that $G'$ (the underlying undirected graph) has treewidth $O(1)$ by constructing a tree decomposition (specifically, a path decomposition). Define the bags $B_1, \dots, B_{k-1}$ as:
    \[ B_i = L_i \cup L_{i+1}. \]
It is easy to check that this decomposition satisfies all properties of Definition \ref{def:tree_cover}, and the size of each bag is bounded by a constant, implying that the treewidth is $O(1)$. See the figure below for a pictorial representation of the graph.

\begin{figure}[!h]
\centering
\begin{tikzpicture}[
    node distance=2.5cm,
    layer node/.style={
        circle, 
        draw=black, 
        very thick, 
        minimum size=0.8cm, 
        fill=white,
        font=\bfseries
    },
    edge/.style={
        ->, 
        >={Stealth[round]}, 
        thick, 
        draw=gray!80
    }
]

    \def\numlayers{4} 
    
    
    \node[layer node] (L0-1) at (0,0) {$r$};

    \foreach \i in {1,...,\numlayers} {
        \pgfmathsetmacro{\xpos}{\i * 2.5}
        
        \node[layer node] (L\i-1) at (\xpos, 1) {$L_{\i,1}$};
        \node[layer node] (L\i-2) at (\xpos, -1) {$L_{\i,2}$};
    }


    \foreach \target in {1,2} {
        \draw[edge] (L0-1) -- (L1-\target);
    }

    \pgfmathsetmacro{\penultimate}{\numlayers - 1}
    
    \foreach \i in {1,...,\penultimate} {
        \pgfmathsetmacro{\nextlayer}{int(\i + 1)}
        
        \foreach \source in {1,2} {
            \foreach \target in {1,2} {
                \draw[edge] (L\i-\source) -- (L\nextlayer-\target);
            }
        }
    }

\end{tikzpicture}
\end{figure}
\end{proof}
    

\end{document}